\documentclass{article} 
\usepackage{iclr2026_conference,times}
\usepackage[dvipsnames,table]{xcolor}
\usepackage[colorlinks=true, linkcolor=ForestGreen!70!black, citecolor=ForestGreen!70!black, urlcolor=ForestGreen!70!black]{hyperref}

\usepackage{url}
\usepackage{amsmath,amsfonts,amssymb,bm}

\newtheorem{assumption}{Assumption}

\newenvironment{proof}[1][Proof]{\par\noindent\emph{#1.} }{\hfill$\square$\par}
\usepackage{multirow}
\usepackage{mathtools}
\usepackage{wrapfig}
\usepackage{enumitem}
\usepackage{booktabs}
\usepackage{tabularray}
\usepackage{algorithm, algpseudocode}
\usepackage{thmtools,thm-restate}

\usepackage[nameinlink]{cleveref}
\crefname{theorem}{theorem}{theorems}
\Crefname{theorem}{Theorem}{Theorems}
\crefname{proposition}{proposition}{propositions}
\Crefname{proposition}{Proposition}{Propositions}
\crefname{assumption}{assumption}{assumptions}
\Crefname{assumption}{Assumption}{Assumptions}
\crefname{definition}{definition}{definitions}
\Crefname{definition}{Definition}{Definitions}
\crefname{corollary}{corollary}{corollaries}
\Crefname{corollary}{Corollary}{Corollaries}
\crefname{lemma}{lemma}{lemmas}
\Crefname{lemma}{Lemma}{Lemmas}
\crefname{figure}{Fig.}{Figs.}
\Crefname{figure}{Figure}{Figures}
\crefname{appendix}{Appendix}{Appendices}
\Crefname{appendix}{Appendix}{Appendices}

\title{Tree Reward-Aligned Search for TReASURe in Masked Diffusion Language Models}


\author{
Zichao Yu$^{1, *}$\;
Ming Li$^{2, *}$\;
Wenyi Zhang$^{1}$\;
Weiguo Gao$^{2, \dagger}$    \\
$^*$Equal Contribution \quad
$^\dagger$Corresponding author
\\
$^1$University of Science and Technology of China \quad
$^2$Fudan University\\
$^1$\texttt{zichaoyu@mail.ustc.edu.cn, wenyizha@ustc.edu.cn}\\
$^2$\texttt{mingli23@m.fudan.edu.cn, wggao@fudan.edu.cn}
\vspace{-4mm}
}

\iclrfinalcopy
\lhead{}

\begin{document}

\maketitle

\begin{abstract}
Tree search has recently emerged as a powerful framework for aligning generative models with task-specific rewards at test time. 
Applying tree search to Masked Diffusion Language Models, however, introduces two key challenges: (\romannumeral1) parallel unmasking yields highly correlated branches, limiting exploration, and (\romannumeral2) reward evaluation via sampled completions produces high-variance estimates, making pruning unstable. 
We propose \textsc{TReASURe}, a tree-search test-time alignment method that addresses these issues. 
It introduces (\romannumeral1) \textsc{UnmaskBranch}, a branching strategy based on first-hitting unmasking that diversifies both token content and reveal order with a single model call per parent node, and (\romannumeral2) \textsc{ResubstituteScore}, a pruning rule that uses deterministic resubstitution to score partially masked sequences with low-variance proxy completions. 
Theoretically, we quantify branching efficiency gains in NFEs (number of function evaluations), show that the scoring rule approximates the true reward with error bounded by predictive uncertainty, and prove improvements with larger tree widths. 
Empirically, \textsc{TReASURe} achieves state-of-the-art results on perplexity, linguistic acceptability, and control of sentiment and toxicity, outperforming prior methods under matched compute budgets, with especially strong gains in low-NFE regimes.
\end{abstract}

\section{Introduction}

\begin{figure}[b]
\centering
\includegraphics[width=0.99\linewidth]{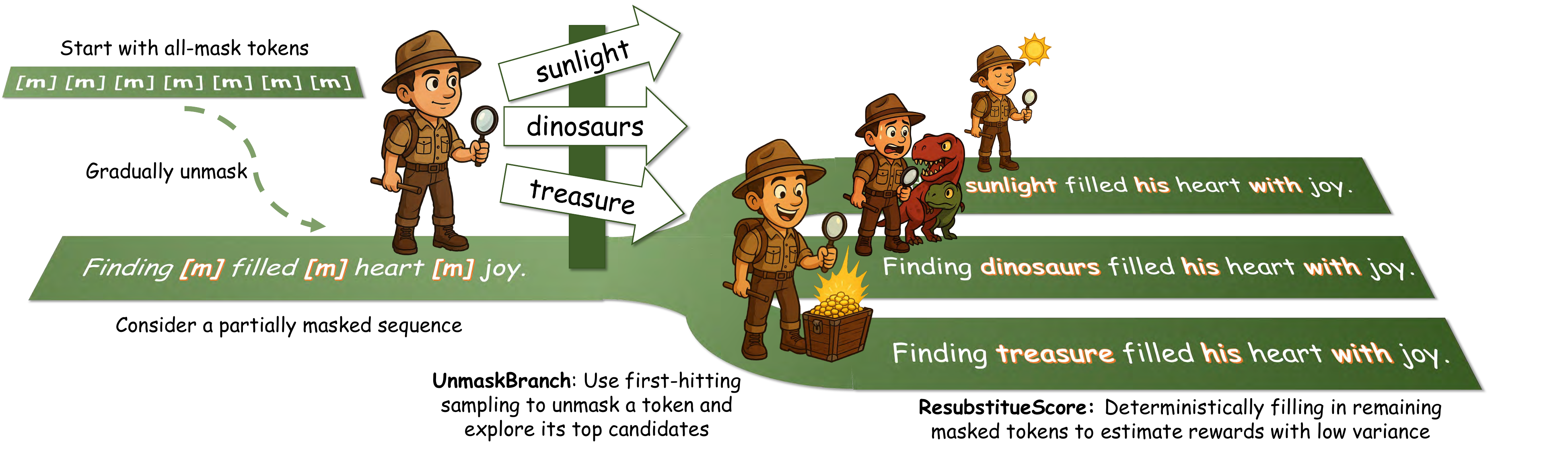}
\caption{\textbf{Conceptual illustration of \textsc{TReASURe}.} \textsc{UnmaskBranch} uses first-hitting sampling to branch by selecting both which position to unmask next and which token to place there, thereby diversifying exploration. \textsc{ResubstituteScore} deterministically fills in the remaining mask tokens to obtain low-variance reward estimates for pruning.}
\label{fig:illustration}
\end{figure}

Masked Diffusion Language Models (MDLMs)~\citep{nie2025large,sahoo2024simple,shi2024simplified,yang2025mmada} have emerged as a compelling alternative to autoregressive models~\citep{brown2020language,radford2019language,touvron2023llama}. They start with all-mask tokens and gradually reveal tokens through a sequence of discrete denoising steps. At each step, the model predicts token distributions for masked positions, conditioned on the current partially masked sequence and the diffusion timestep. This formulation enables flexible sampling schedules and broad conditioning patterns, making MDLMs well-suited for controllable generation tasks.

However, this flexibility is not fully realized without mechanisms to align the model's outputs with user-defined objectives. Test-Time Alignment (TTA) enables guiding language model outputs toward task-specific goals without retraining. In applications such as toxicity avoidance~\citep{logacheva2022paradetox}, sentiment control~\citep{barbieri2020tweeteval}, or enforcing linguistic acceptability~\citep{warstadt2019neural}, aligning generation with external reward functions at test time offers a flexible and training-free alternative to supervised fine-tuning. While TTA has been actively explored in autoregressive~\citep{liu2024aligning,lu2022quark,ziegler2019fine} and continuous diffusion models~\citep{guo2025training,li2025dynamic,singhal2025general,uehara2024understanding,uehara2024bridging}, its application to MDLMs remains limited. To our knowledge, only a few recent works have attempted to integrate reward signals into MDLM decoding at test time. For example, \citet{singhal2025general} propose Feynman--Kac steering, but their approach assumes continuous-state dynamics that may not translate well to discrete, token-level masked diffusion. \citet{pani2025test} introduce a sequential Monte Carlo method, but their evaluation is limited to image generation tasks. See~\Cref{app:additional_literature_review} for a broader discussion.

Tree search has recently shown strong performance in aligning continuous diffusion models~\citep{guo2025training,li2025dynamic} at test time, offering a principled framework for balancing exploration and exploitation. However, applying tree search to MDLMs poses unique challenges. First, branching is ineffective under parallel unmasking: na\"ive updates often yield negligible changes, producing highly correlated trajectories and poor exploration. Second, pruning is unstable in discrete spaces: unlike continuous diffusion, where smooth latent dynamics enable reliable intermediate value estimates, MDLMs output categorical distributions per masked position, making sampled reward estimates high-variance and brittle to small logit perturbations. Overcoming these challenges requires rethinking both branching and pruning.

We propose \textsc{TReASURe} (\textbf{T}ree \textbf{Re}ward-\textbf{A}ligned \textbf{S}earch with \textbf{U}nmasking and \textbf{Re}substitution), a tree-search method designed for MDLMs. \textsc{TReASURe} introduces a branching rule based on \emph{first-hitting unmasking}, which expands the search only at commitment events, preserving efficiency while diversifying unmasking order and token content. For pruning, it employs \emph{resubstitution scoring}, which deterministically fills masked positions to provide low-variance reward estimates with minimal model calls. A conceptual illustration can be found in~\Cref{fig:illustration}. Theoretically, we quantify its efficiency gains in NFEs, show that its scoring rule approximates the true reward with error bounded by the model's predictive uncertainty, and establish provable improvements with larger tree widths. Empirically, across controllable generation tasks (perplexity, linguistic acceptability, toxicity, and sentiment), \textsc{TReASURe} achieves state-of-the-art rewards under matched compute budgets, outperforming na\"ive sampling, Best-of-\(N\), and Feynman--Kac steering.

\paragraph{Contributions.} Our contributions can be summarized as follows: \textbf{(\romannumeral1)} A new perspective on TTA for MDLMs via tree search; \textbf{(\romannumeral2)} a branching rule that exploits unmasking events for efficient, diverse exploration; \textbf{(\romannumeral3)} a pruning rule based on deterministic resubstitution for low-variance reward estimation under fixed NFE; \textbf{(\romannumeral4)} theoretical guarantees including branching efficiency, pruning accuracy, and reward gains; and \textbf{(\romannumeral5)} state-of-the-art performance across extensive controllable generation benchmarks under a fixed compute budget.

\paragraph{Notation.} Let \(\mathcal{V}\) be the set of one-hot vectors in \(\mathbb{R}^V\), with the \(V\)th component reserved for the mask token \(\bm{m}\). Discrete variables are denoted by \(\bm{z}_t, \bm{z}_n, \bm{x} \in \mathcal{V}\), where subscripts \(t\) and \(n\) indicate time and the number of masked tokens. We write \(\bm{x} \sim \mathrm{Cat}(\bm{x};\bm{p})\) if \(\bm{x}\) is drawn from a categorical distribution with parameter \(\bm{p} \in \Delta^V\), the probability simplex. For length-\(L\) sequences, we write \(\bm{z}_t^{1:L}, \bm{z}_n^{1:L}, \bm{x}^{1:L} \in \mathcal{V}^L\), with \(\bm{z}_t^\ell, \bm{z}_n^\ell, \bm{x}^\ell\) denoting the \(\ell\)th token. Finally, \(\mathrm{TopK}_b(\bm{\mu})\) returns the indices of the top \(b\) entries of \(\bm{\mu} \in \Delta^V\).

\section{Background}

\subsection{Masked Diffusion Language Models}

We provide the necessary MDLM background here, with further details in~\Cref{app:more_details_on_mdlms}.

\paragraph{Forward and reverse processes.} MDLMs~\citep{sahoo2024simple,shi2024simplified} define a forward process that mixes data with the absorbing mask token \(\bm{m}=(0,\dotsc,0,1)\in\Delta^V\):
\begin{equation}
q(\bm{z}_t | \bm{x})=\mathrm{Cat}(\bm{z}_t;\alpha_t \bm{x}+(1-\alpha_t)\bm{m}),
\end{equation}
where \(\alpha_t\) decreases monotonically from \(\alpha_0\approx1\) to \(\alpha_1\approx0\). Once a token becomes masked at time \(s\), it remains so for all \(t>s\), i.e., \(q(\bm{z}_t|\bm{z}_s=\bm{m})=\mathrm{Cat}(\bm{z}_t;\bm{m})\). Applied to a sequence, this corruption acts independently across positions, so tokens evolve in parallel. Learning the reverse process thus amounts to iterative unmasking. Conditioned on \(\bm{x}\), the time reversal of the forward process for \(s < t\) is
\begin{equation}
\label{eq:time_reversal}
q(\bm{z}_s | \bm{z}_t,\bm{x})=
\begin{cases}
\mathrm{Cat}(\bm{z}_s;\bm{z}_t), & \bm{z}_t\neq \bm{m},\\
\mathrm{Cat}\Bigl(\bm{z}_s;\dfrac{(1-\alpha_s)\bm{m}+(\alpha_s-\alpha_t)\bm{x}}{1-\alpha_t}\Bigr), & \bm{z}_t=\bm{m}.
\end{cases}
\end{equation}
At inference, since \(\bm{x}\) is unknown, we approximate it with a learned network \(\bm{x}_\theta(\bm{z}_t,t)\), trained with the objective described in~\Cref{app:more_details_on_mdlms}. The learned time reversal is then \(p_\theta(\bm{z}_s | \bm{z}_t)=q\bigl(\bm{z}_s | \bm{z}_t,\bm{x}_\theta(\bm{z}_t,t)\bigr)\). The network prediction \(\bm{x}_\theta\) is usually constrained so that (\romannumeral1) it assigns zero probability to the mask token, and (\romannumeral2) it directly copies already unmasked tokens, ensuring that only masked tokens need to be reconstructed.

\paragraph{Na\"ive parallel sampling.} A simple way to sample from MDLMs is to unmask positions parallelly at each step. For sequences of length \(L\), this corresponds to a factorized reverse transition:
\begin{equation}
\label{eq:naive_parallel_sampling}
p_\theta(\bm{z}_s^{1:L}| \bm{z}_t^{1:L})
\coloneqq\prod_{\ell=1}^{L} p_\theta(\bm{z}_s^{\ell}| \bm{z}_t^{1:L})
=\prod_{\ell=1}^{L} q\bigl(\bm{z}_s^{\ell}| \bm{z}_t^{1:L},\bm{x}_\theta(\bm{z}_t^{1:L},t)\bigr).
\end{equation}
Due to the small changes between adjacent diffusion steps, only a small subset of entries are effectively unmasked at each step, leading to inefficient use of computation.
 
\paragraph{First hitting sampling (FHS).} To mitigate the inefficiency of na\"ive parallel sampling, \citet{zheng24masked} proposed first hitting sampling (FHS), which only simulates the \emph{moments when actual unmasking events occur}. Set \(\tau_L=1\). When \(n\) masked tokens remain, a uniform random variable \(u_n \sim \mathcal{U}(0,1)\) is used to sample the next event time
\begin{equation}
\tau_{n-1} = \alpha^{-1}\bigl(1-u_n^{1/n}(1-\alpha_{\tau_n})\bigr),
\end{equation}
where \(\alpha^{-1}\) is the inverse noise schedule and \(\tau_n\) is the current time. One of the masked positions is then chosen uniformly at random and unmasked according to the model's categorical prediction at time \(\tau_{n-1}\). See~\Cref{alg:unmask-branch} in~\Cref{subsec:branching_via_unmasking_for_increased_branch_diversity} for the pseudocode.

\subsection{Test-Time Alignment}  

A common objective in TTA is to obtain generations that remain consistent with the pretrained model while maximizing a task-specific reward \(r\colon \mathcal{V}^{L} \rightarrow \mathbb{R}\). This can be expressed as a KL-regularized optimization problem with a closed-form solution~\citep{faria2025sample,uehara2024understanding}:
\begin{equation}
\begin{aligned}
p_{\mathrm{tar}} = \arg\max_{p}
\mathbb{E}_{\bm{x}^{1:L} \sim p} [ r(\bm{x}^{1:L})]
- \lambda D_{\mathrm{KL}}(p \| p_{\mathrm{pre}})\propto 
  p_{\mathrm{pre}}(\bm{x}^{1:L})\cdot \exp\Bigl(\dfrac{r(\bm{x}^{1:L})}{\lambda}\Bigr).
\end{aligned}
\end{equation}
where \(\lambda > 0\) is a temperature parameter that controls the trade-off between reward maximization and staying close to the pretrained distribution. Following the entropy-regularized inference-time alignment framework~\citep{pani2025test,singhal2025general,uehara2024bridging}, the corresponding \emph{soft value function} at step \(t\) is defined as
\begin{equation}
v_{t}(\bm{z}_t^{1:L})
\coloneqq \lambda \log
\mathbb{E}_{\bm{x}^{1:L} \sim p_{\mathrm{pre}}(\cdot|\bm{z}_t^{1:L} )}
\Bigl[\exp\Bigl(\dfrac{r(\bm{x}^{1:L})}{\lambda}\Bigr)
\Bigr]\approx r\bigl(\mathbb{E}_{\bm{x}^{1:L} \sim p_{\mathrm{pre}}(\cdot|\bm{z}_t^{1:L})}[\bm{x}^{1:L}]\bigr).
\end{equation}
The last approximation involves two simplifications: replacing the log–exp expectation with a direct expectation, and evaluating the reward at the mean instead of averaging. This reduces the problem to approximating the soft value function at intermediate steps rather than computing it exactly.

\section{Method}
\label{sec:method}

We introduce \textsc{TReASURe}, a TTA method for MDLMs based on tree search. Classical tree search consists of two components: (\romannumeral1) branching, which expands the search frontier by generating diverse candidate continuations, and (\romannumeral2) pruning, which retains only the most promising nodes using a task-specific value function. We first show how this framework applies naturally to \emph{continuous} diffusion models, then highlight the unique challenges in MDLMs, and describe how \textsc{TReASURe} rethinks branching and pruning to address them.

\subsection{Rethinking Branching and Pruning for MDLMs}

\paragraph{Tree search for continuous diffusion models.} In continuous diffusion models, tree search is a natural fit~\citep{li2025dynamic}. Each reverse step \(p_{\theta}(\bm{z}_{t-1}|\bm{z}_t)\) is stochastic (e.g., in DDPM~\citep{ho2020denoising}), so branching arises naturally by sampling multiple candidates for \(\bm{z}_{t-1}\). Because the latent space is smooth, the model's prediction \(\hat{\bm{x}}_0(\bm{z}_t,t)\) (or short rollouts) yields reliable intermediate reward estimates \(r(\hat{\bm{x}}_0(\bm{z}_t,t))\), enabling effective pruning by discarding low-value branches.

\begin{wrapfigure}{R}{0.5\textwidth}
\vspace{-1em}
\centering
\includegraphics[width=0.48\textwidth]{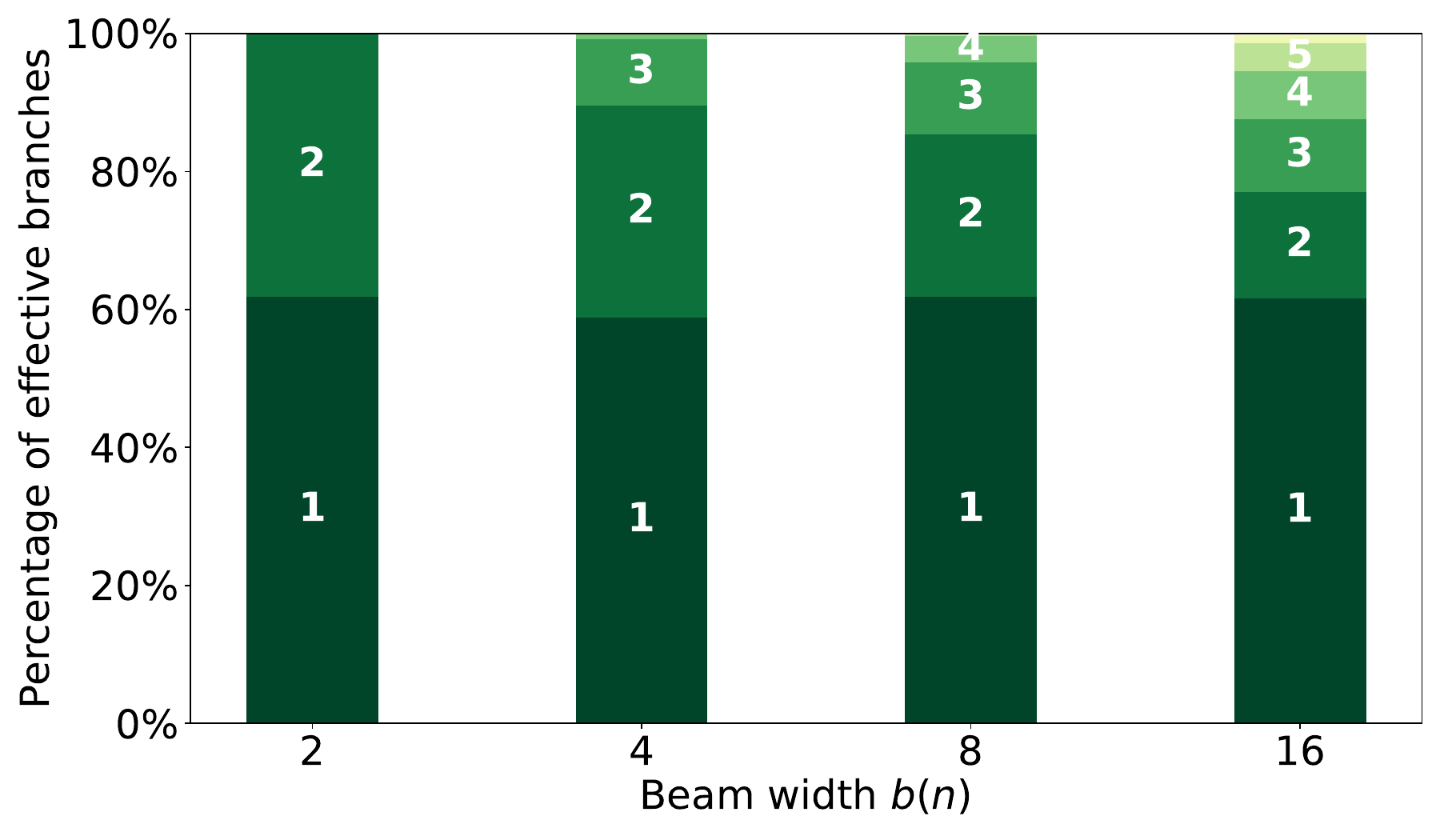}
\caption{\textbf{Distribution of effective branch counts under different beam widths \(b(n)\).} Despite wider beams, most nodes collapse to only one or two effective (distrinct) branches, indicating that parallel sampling produces many redundant branches.}
\label{fig:branch_motivation}
\vspace{-1em}
\end{wrapfigure}

\paragraph{Challenges for branching in MDLMs.} The key challenge is that branching in MDLMs does not naturally produce diverse trajectories. This is due to two structural properties of masked diffusion. First, sampling is performed \emph{in parallel}: all masked positions are updated simultaneously, resulting in tightly coupled token distributions. Repeated sampling from the same state yields highly correlated or nearly identical candidates, so naive branching by repeating the sampling multiple times explores only a narrow subset of the space, as illustrated in~\cref{fig:branch_motivation}. Second, the \emph{unmasking schedule} which determines which tokens are revealed at each step is decided endogenously by the model (e.g., via confidence thresholds). Since this schedule is unpredictable, local resampling rarely alters which tokens are committed next, limiting trajectory diversity. These phenomena make straightforward token-level branching ineffective, requiring a rethinking of how to construct and expand the search tree.

\begin{wrapfigure}{R}{0.5\textwidth}
\vspace{-2.7em}
\centering
\includegraphics[width=0.48\textwidth]{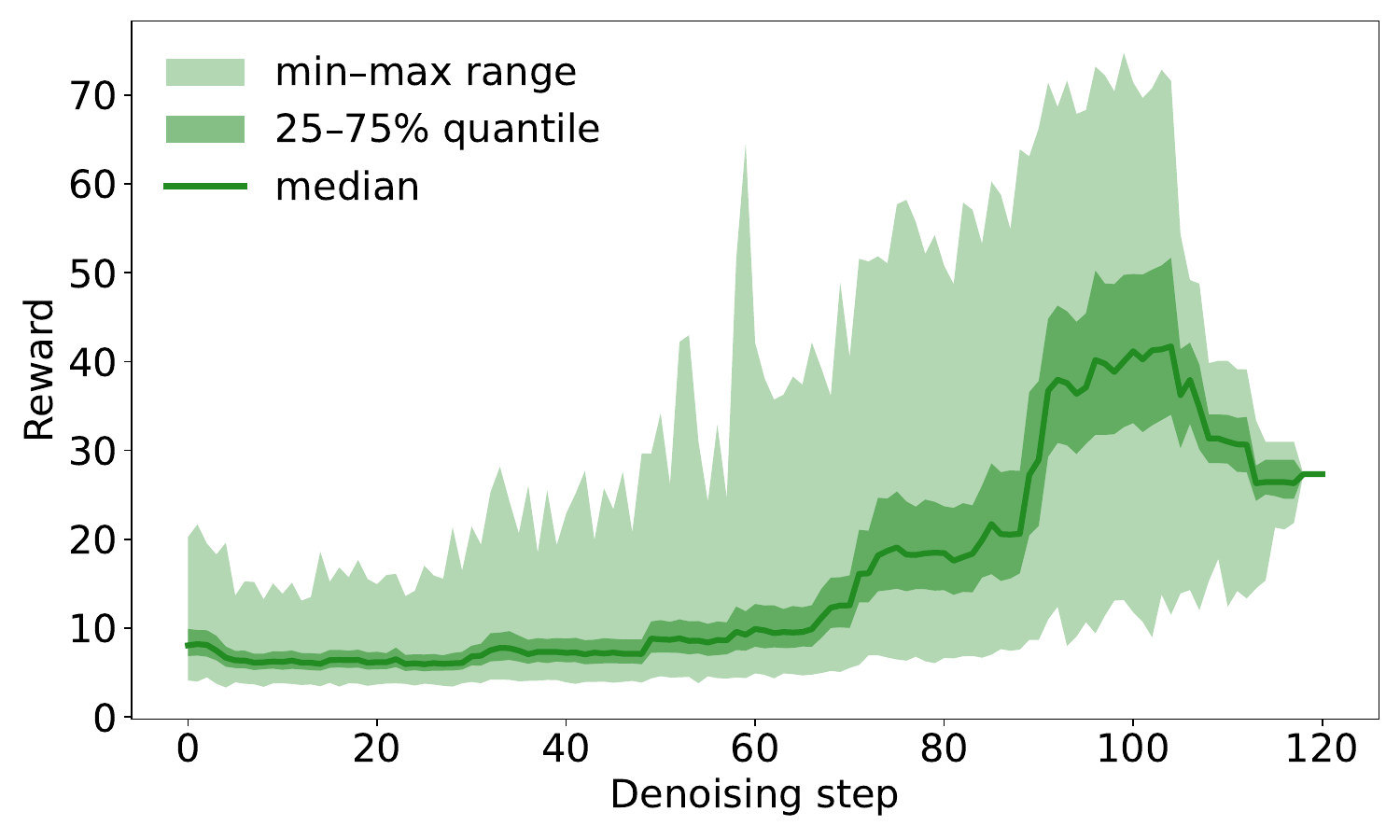}
\caption{\textbf{Reward variation across denoising steps on CoLA.} Median (line), interquartile range (25--75\%, dark green), and min--max range (light green) reveal large fluctuations, underscoring the need for stable pruning rules.}
\label{fig:prune_motivation}
\vspace{0.2em}
\end{wrapfigure}

\paragraph{Challenges for pruning in MDLMs.} Unlike continuous diffusion models, which predict a point in latent space, MDLMs output a \emph{distribution} over vocabulary tokens at each masked position. Estimating the value function thus requires sampling a complete sequence from this distribution,
\begin{equation}
\hat{v}_t(\cdot) \coloneqq r\bigl(\bm{\hat{x}}_0^{1:L}(\bm{z}_t^{1:L})\bigr),
\end{equation}
where \(\bm{\hat{x}}_0^{1:L}\) denotes a random sample from \(\bm{x}_\theta(\bm{z}_t^{1:L}, t)\)~\citep{singhal2025general}. This introduces high variance: small perturbations in \(\bm{\hat{x}}_0^{1:L}\) can cause large changes in the resulting reward. In principle, this variance could be reduced by sampling multiple completions per node, but doing so greatly increases computational cost. As a result, pruning becomes unstable, and heuristics that work well in continuous diffusion often fail in the discrete masked setting. \Cref{fig:prune_motivation} illustrates this effect on the CoLA reward~\citep{warstadt2019neural}, where reward values fluctuate widely across denoising steps. 
Additional results for toxicity and sentiment rewards are provided in~\Cref{app:more_experiments}, highlighting the generality of this challenge and motivating the need for alternative pruning strategies.

\subsection{Branching via Unmasking for Increased Branch Diversity}
\label{subsec:branching_via_unmasking_for_increased_branch_diversity}

Recall that MDLMs pose unique branching difficulties: (\romannumeral1) parallel updates couple all masked tokens, so repeated resampling produces highly similar candidates; and (\romannumeral2) the unmasking schedule is endogenous, so perturbing probabilities rarely alters which token is committed next.

\begin{wrapfigure}{R}{0.5\textwidth}
\small
\vspace{-2.8em}
\begin{minipage}{0.5\textwidth}
\begin{algorithm}[H]
\caption{\textsc{UnmaskBranch}}
\label{alg:unmask-branch}
\begin{algorithmic}[1]
\Require Parent node \((\bm{z}_n^{1:L},\tau_n)\) with \(n\) masks; beam width \(b(n)\)
\State \(u_n \sim \mathcal{U}(0,1)\)
\State \(\tau_{n-1} \gets \alpha^{-1}\bigl(1-u_n^{1/n}(1-\alpha_{\tau_n})\bigr)\)
\State \(\bm{\mu}_n \gets \bm{x}_\theta(\bm{z}_n^{1:L},\tau_{n-1})\)
\State \(\ell \sim \mathrm{Unif}\{j\colon\bm{z}_n^{(j)}=\bm{m}\}\)
\State \(\mathcal{Z} \gets \mathrm{TopK}_{b(n)}(\bm{\mu}_n^{\ell})\)
\Comment{\(\mathcal{Z}\) stores all the selected tokens}
\State \Return \((\mathcal{Z}, \tau_{n-1}, \bm{\mu}_n)\)
\end{algorithmic}
\end{algorithm}
\end{minipage}
\vspace{-1em}
\end{wrapfigure}

To resolve these challenges, we branch only at commitment events. Using first-hitting sampling~\citep{zheng24masked}, from the current unmasking time \(\tau_n\), we jump directly to the next unmasking time \(\tau_{n-1}\), evaluate the model once, and generate child nodes by (\romannumeral1) uniformly selecting a masked index and (\romannumeral2) enumerating the top-\(b(n)\) tokens at that index (see~\Cref{alg:unmask-branch} for the pseudocode). This design introduces diversity in both unmasking \emph{order} and token \emph{content}, avoids wasted updates, and requires only one model call per parent node. The beam width \(b(n)\) flexibly controls exploration, enabling wide yet efficient search at test time.

In comparison, na\"ive parallel sampling must repeatedly simulate transitions until unmasking occurs. We show in~\Cref{thm:unmaskbranch-cost} that to obtain \(b(n)\) child nodes from a parent node with \(n\) masked tokens, it requires an average of \(b(n)/(1-\exp(-nh))\) model evaluations, where \(h \ll 1\) is the discretization step. This cost grows rapidly as \(h \to 0\), which is often necessary for accurate sampling~\citep{sahoo2024simple}. In contrast, our method performs only one model evaluation per parent node, yielding substantial speedup in branching. 

\begin{restatable}[Efficiency of \textsc{UnmaskBranch} over na\"ive parallel sampling]{theorem}{UnmaskBranch}
\label{thm:unmaskbranch-cost}
Fix a parent node with \(n\) masked tokens and reparameterize time by \(\gamma(t)=-\log(1-\alpha_t)\). Discretize \(\gamma\) on a uniform grid with step size \(h\ll 1\). Run na\"ive parallel sampling (\cref{eq:naive_parallel_sampling}) and, in each run, stop at the first branch commitment (i.e., the first position that unmasks); repeat from the same parent node until \(b(n)\) child nodes have been obtained. Then the expected total number of model evaluations required by the na\"ive parallel sampling is 
\begin{equation}
\mathbb{E}[\mathrm{evals}]=\dfrac{b(n)}{1-\exp(-nh)}. 
\end{equation} 
In contrast, \textsc{UnmaskBranch} produces \(b(n)\) child nodes with exactly one evaluation. In particular, for \(b(n)=1\), \textsc{UnmaskBranch} matches the na\"ive-parallel first-change distribution over \((\tau_{n-1},\ell)\), where \(\tau_{n-1}\) is the next unmasking time and \(\ell\in[n]\) is the index of the committed position.
\end{restatable}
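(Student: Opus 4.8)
The plan is to verify the three assertions separately: (i) the expected evaluation count of na\"ive parallel sampling, (ii) that \textsc{UnmaskBranch} spends exactly one evaluation, and (iii) the distributional match when \(b(n)=1\).

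For (i), the first step is to move to the \(\gamma\)-coordinate. By the time reversal in \cref{eq:time_reversal}, a position masked at time \(t\) stays masked after one reverse step to \(s<t\) with probability \((1-\alpha_s)/(1-\alpha_t)\); substituting \(1-\alpha_t=e^{-\gamma(t)}\) and using that consecutive grid points satisfy \(\gamma(s)=\gamma(t)+h\) makes this the \emph{constant} per-step survival probability \(e^{-h}\). Since the factorized transition \cref{eq:naive_parallel_sampling} updates the masked positions conditionally independently, the probability that \emph{some} position commits in a step is \(p_n:=1-e^{-nh}\). A single run performs one model evaluation per step and halts at the first commitment, so its number of evaluations is a first-success geometric variable with parameter \(p_n\), of mean \(1/p_n\). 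The \(b(n)\) runs are i.i.d.\ and each produces one child node, so by linearity of expectation \(\mathbb{E}[\mathrm{evals}]=b(n)/(1-e^{-nh})\).

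Claim (ii) is immediate from \cref{alg:unmask-branch}: the only call to \(\bm{x}_\theta\) is line~3, and the \(b(n)\) children then come from the deterministic \(\mathrm{TopK}_{b(n)}\) selection with no further network call. For (iii) I would use the exponential-clock description of reverse unmasking. Running the chain backward from \(\tau_n\), let \(T_j\in(0,\tau_n)\) be the (a.s.\ unique) time at which masked position \(j\) is revealed. Exactly as in (i), \(\Pr[T_j<t]=(1-\alpha_t)/(1-\alpha_{\tau_n})\), and the \(n\) clocks are i.i.d.\ with this continuous law. The first commitment is at \(\tau_{n-1}=\max_j T_j\) with committed index \(\ell=\arg\max_j T_j\), so \(\Pr[\tau_{n-1}\le t]=\big((1-\alpha_t)/(1-\alpha_{\tau_n})\big)^{n}\), and by exchangeability of i.i.d.\ continuous variables, \(\ell\) is uniform on the masked positions and independent of \(\tau_{n-1}\).

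It then remains to check that \textsc{UnmaskBranch} with \(b(n)=1\) reproduces this law exactly, and that na\"ive parallel is a consistent discretization of it. Lines~1--2 of \cref{alg:unmask-branch} set \(1-\alpha_{\tau_{n-1}}=u_n^{1/n}(1-\alpha_{\tau_n})\) with \(u_n\sim\mathcal{U}(0,1)\), giving \(\Pr[\tau_{n-1}\le t]=\Pr\!\big[u_n\le((1-\alpha_t)/(1-\alpha_{\tau_n}))^{n}\big]=\big((1-\alpha_t)/(1-\alpha_{\tau_n})\big)^{n}\), while line~4 draws \(\ell\) uniformly and independently; this matches the previous paragraph. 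On the na\"ive side, in the \(\gamma\)-grid the first-change increment \(\gamma(\tau_{n-1})-\gamma(\tau_n)\) equals \(h\) times a geometric variable with parameter \(1-e^{-nh}\), which converges in distribution to \(\mathrm{Exp}(n)\)---the law above---as \(h\to0\), and the committed index is uniform in the same limit (a simultaneous double unmasking has probability \(O(h)\)). Hence the two first-change laws over \((\tau_{n-1},\ell)\) coincide.

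I expect the main obstacle to be the bookkeeping in (iii): orienting the backward survival functions so that ``first to commit'' becomes ``largest \(T_j\)'', justifying that the reveal times are genuinely i.i.d.\ with a continuous distribution (so that the \(\arg\max\)/\(\max\) independence applies), and stating precisely that equality with na\"ive parallel holds in the fine-grid limit \(h\to0\) rather than at a fixed \(h\). Parts (i) and (ii) are routine once the \(\gamma\)-reparameterization is in place.
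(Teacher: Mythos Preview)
Your proposal is correct and follows essentially the same approach as the paper. Parts (i) and (ii) are identical to the paper's argument: the constant per-step survival probability \(e^{-h}\) in the \(\gamma\)-grid, conditional independence across positions, and the resulting geometric waiting time. For part (iii) you take a slightly more direct route: you work in the original time coordinate and compute the CDF of \(\tau_{n-1}=\max_j T_j\) from the i.i.d.\ revelation clocks, then verify that lines~1--2 of \Cref{alg:unmask-branch} reproduce exactly this CDF. The paper instead stays in the \(\gamma\)-coordinate, takes the geometric-to-exponential limit of \(hG_h\), and recovers the FHS update as the inverse CDF of \(\mathrm{Exp}(n)\). The two arguments are equivalent; yours has the minor advantage of making explicit that equality with na\"ive parallel is in the \(h\to 0\) limit (a point the paper's proof leaves somewhat implicit), while the paper's version ties the result more visibly to the first-hitting-time interpretation that motivates FHS.
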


Beyond efficiency, we also consider the distributional behavior of \textsc{UnmaskBranch} in~\Cref{thm:unmaskbranch-cost}. For \(b(n)=1\), it produces the same distribution over the next unmasking time \(\tau_{n-1}\) and committing index \(\ell \in [n]\) as the first-change outcome of na\"ive parallel sampling, as originally shown by~\citet[Proposition 4.1]{zheng24masked}. For completeness, we restate the result and provide an alternative proof in~\Cref{app:theoretical_guarantee_Of_treasure}. For \(b(n)>1\), however, na\"ive parallel sampling may unmask different positions across runs, whereas \textsc{UnmaskBranch} fixes a single index, leading to a different distribution.

\subsection{Pruning via Resubstitution for Efficient Reward Evaluation}

\begin{wrapfigure}{R}{0.5\textwidth}
\small
\vspace{-2.4em}
\begin{minipage}{0.5\textwidth}
\begin{algorithm}[H]
\caption{\textsc{ResubstituteScore}}
\label{alg:resubtitute-score}
\begin{algorithmic}[1]
\Require Candidate \(\bm{z}_{n-1}^{1:L}\), probabilities \(\bm{\mu}_n\)
\State \(\hat{\bm{x}}_0^{1:L} \gets \bm{z}_{n-1}^{1:L}\)
\For{masked position \(\ell\) in \(\bm{z}_{n-1}^{1:L}\)}
  \State \(\hat{\bm{x}}_0^\ell \gets \arg\max \bm{\mu}_n^\ell\) \Comment{let \(\hat{\bm{x}}_0^\ell\) be the token with the highest probability}
\EndFor
\State \Return \(r(\hat{\bm{x}}_0^{1:L})\)
\end{algorithmic}
\end{algorithm}
\end{minipage}
\vspace{-1.2em}
\end{wrapfigure}

Pruning is equally challenging: (\romannumeral1) categorical predictions make sampled rewards noisy and unstable; and (\romannumeral2) drawing extra completions per node inflates compute cost. As a result, naive scoring destabilizes search.

To address these challenges, we reuse the probabilities from branching to construct a provisional completion by \emph{resubstitution}: committed tokens remain fixed, while masked positions are filled with current head predictions. This single proxy completion is scored once with the reward (see~\Cref{alg:resubtitute-score} for the pseudocode). Resubstitution enables low-variance, deterministic scoring without extra model calls; it uses temporal locality by evaluating probabilities at the precise commitment time; and it ensures pruning costs to scale linearly with the number of parent nodes \(m(n)\). Together, these properties enable stable, reward-aware pruning under the same NFE budget as baseline sampling.

We provide a theoretical justification for this pruning rule. Assuming the reward is Hamming--Lipschitz (\Cref{assump:lipschitz}), we show in \Cref{thm:resubstitute-gap} that the gap between the resubstituted reward and the true expected reward is bounded by the model's predictive uncertainty at masked positions.

\begin{assumption}[Hamming--Lipschitz reward]
\label{assump:lipschitz}
We assume that the reward function is Lipschitz continuous with respect to the Hamming distance, i.e., there exists a constant \(\beta > 0\), such that for all \(\bm{x}^{1:L},\bm{y}^{1:L}\in\mathcal{V}^L\), we have
\begin{equation}
\bigl|r(\bm{x}^{1:L})-r(\bm{y}^{1:L})\bigr|\leq\beta\cdot d_{\mathrm{H}}(\bm{x}^{1:L},\bm{y}^{1:L}),
\end{equation}
where the Hamming distance is given by \(d_{\mathrm{H}}(\bm{x}^{1:L},\bm{y}^{1:L})\coloneqq\sum_{\ell=1}^{L}\mathbf{1}_{\{\bm{x}^\ell\neq \bm{y}^\ell\}}\).
\end{assumption}

\begin{restatable}[Resubstitution gap controlled by max confidence]{theorem}{ResubGap}
\label{thm:resubstitute-gap}
Let \((\bm{z}^{1:L}_{n-1},\tau_{n-1})\) be a state, and let \(\bm{\mu}_n=\bm{x}_\theta(\bm{z}^{1:L}_{n-1},\tau_{n-1})\) denote the model probabilities. Denote by \(\mathcal{I}_{n-1}=\{\ell\colon\bm{z}^{\ell}_{n-1}=\bm{m}\}\) the set of masked indices. Let \(\hat{\bm{x}}_0^{1:L}\) be the resubstituted completion from~\Cref{alg:resubtitute-score}, and let \(\bm{X}^{1:L}\) be a random completion obtained by sampling each masked index \(\ell\in\mathcal{I}_{n-1}\) as \(X^\ell\sim \mathrm{Cat}(\cdot;\bm{\mu}_n^\ell)\). Under~\Cref{assump:lipschitz} and let \(\beta\) be the Lipschitz constant therein, we have
\begin{equation}
\Bigl|\mathbb{E}\bigl[r(\bm{X}^{1:L})\bigr] - r\bigl(\hat{\bm{x}}_0^{1:L}\bigr) \Bigr|
 \le\
\beta \sum_{\ell\in\mathcal{I}_{n-1}}\Bigl(1-\max_{v\in[V]}\bm{\mu}_n^\ell(v)\Bigr).
\end{equation}
\end{restatable}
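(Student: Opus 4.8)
The plan is a short three-step argument: push the absolute value inside the expectation by Jensen, bound the integrand by the Hamming--Lipschitz property (\Cref{assump:lipschitz}), and then identify each resulting per-coordinate mismatch probability with $1-\max_{v}\bm{\mu}_n^\ell(v)$.

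First I would record that $\hat{\bm{x}}_0^{1:L}$ and $\bm{X}^{1:L}$ agree at every \emph{unmasked} coordinate: \Cref{alg:resubtitute-score} initializes $\hat{\bm{x}}_0^{1:L}\gets\bm{z}_{n-1}^{1:L}$ and only overwrites masked positions, while the MDLM parameterization copies already-unmasked tokens, so $\bm{\mu}_n^\ell$ is the point mass at $\bm{z}_{n-1}^\ell$ for $\ell\notin\mathcal{I}_{n-1}$, forcing $X^\ell=\bm{z}_{n-1}^\ell=\hat{x}_0^\ell$. Hence the two completions are coupled through their shared unmasked part, and $d_{\mathrm{H}}(\bm{X}^{1:L},\hat{\bm{x}}_0^{1:L})=\sum_{\ell\in\mathcal{I}_{n-1}}\mathbf{1}_{\{X^\ell\neq\hat{x}_0^\ell\}}$ is supported on the masked set.

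Second, by the triangle inequality for expectations, $\bigl|\mathbb{E}[r(\bm{X}^{1:L})]-r(\hat{\bm{x}}_0^{1:L})\bigr|\le\mathbb{E}\bigl[|r(\bm{X}^{1:L})-r(\hat{\bm{x}}_0^{1:L})|\bigr]$. Applying \Cref{assump:lipschitz} pointwise gives $|r(\bm{X}^{1:L})-r(\hat{\bm{x}}_0^{1:L})|\le\beta\,d_{\mathrm{H}}(\bm{X}^{1:L},\hat{\bm{x}}_0^{1:L})$; taking expectations and using linearity on the Hamming sum yields $\mathbb{E}[d_{\mathrm{H}}(\bm{X}^{1:L},\hat{\bm{x}}_0^{1:L})]=\sum_{\ell\in\mathcal{I}_{n-1}}\mathbb{P}(X^\ell\neq\hat{x}_0^\ell)$. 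Finally, since $\hat{x}_0^\ell=\arg\max_{v}\bm{\mu}_n^\ell(v)$ and $X^\ell\sim\mathrm{Cat}(\cdot;\bm{\mu}_n^\ell)$ independently across $\ell$, we have $\mathbb{P}(X^\ell=\hat{x}_0^\ell)=\bm{\mu}_n^\ell(\hat{x}_0^\ell)=\max_{v\in[V]}\bm{\mu}_n^\ell(v)$, so $\mathbb{P}(X^\ell\neq\hat{x}_0^\ell)=1-\max_{v\in[V]}\bm{\mu}_n^\ell(v)$. Chaining the three displays gives the claimed bound.

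I do not expect a genuine obstacle: the argument is a one-line coupling plus Jensen plus the Lipschitz hypothesis. The only points needing care are (i) noting that coordinatewise independence of the $X^\ell$ is not actually required—only linearity of expectation on the indicator sum—and (ii) handling ties in the $\arg\max$, which is immaterial because every maximizer is assigned probability exactly $\max_{v}\bm{\mu}_n^\ell(v)$. It may also be worth remarking that the bound is tight for the worst-case reward and degrades gracefully: when the head is confident at the masked positions ($\max_v\bm{\mu}_n^\ell(v)\to1$), the resubstituted score coincides with the expected reward.
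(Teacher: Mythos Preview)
Your proposal is correct and matches the paper's proof essentially step for step: apply the Hamming--Lipschitz bound pointwise, take expectations and pull the absolute value outside via the triangle inequality, note that unmasked coordinates contribute zero, and identify $\mathbb{P}(X^\ell\neq\hat{x}_0^\ell)=1-\max_v\bm{\mu}_n^\ell(v)$ at each masked index. Your side remarks on tie-breaking and on independence being unnecessary are accurate and slightly more careful than the paper's write-up.
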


The bound in~\Cref{thm:resubstitute-gap} justifies using resubstitution for pruning, as it shows that the approximation error is directly bounded by predictive uncertainty: when the model assigns high confidence to its top prediction, the resubstitution reward is close to the expected reward.

\subsection{Full \textsc{TReASURe} Algorithm}

Having introduced the two key building blocks \textsc{UnmaskBranch} (\Cref{alg:unmask-branch}) for branching and \textsc{ResubstituteScore} (\Cref{alg:resubtitute-score}) for pruning, we now describe how \textsc{TReASURe} integrates them into a complete tree-search procedure, summarized in~\Cref{alg:treasure}. Moreover, \Cref{thm:monotone-m} guarantees that increasing the tree width \(m(n)\) always improves the final reward.

\begin{restatable}[Reward monotonicity in tree width]{theorem}{MonotoneM} \label{thm:monotone-m} Fix the beam width \(b(\cdot)\) and run \textsc{TReASURe} (\Cref{alg:treasure}) twice with tree-width schedules \(m(\cdot)\) and \(m'(\cdot)\) such that \(m'(n)\ge m(n)\) for all \(n\in\{1,\dotsc,L\}\). Couple all randomness across the two runs (same \(\textsc{UnmaskBranch}\) draws and model outputs), and use the same deterministic tie-breaking in \(\mathrm{TopK}\). Let the returned rewards be \(r_\star(m)\) and \(r_\star(m')\). Then \(r_\star(m') \geq r_\star(m)\). \end{restatable}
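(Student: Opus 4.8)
The plan is to prove \Cref{thm:monotone-m} by an explicit coupling combined with a downward induction on the stage index $n$ (the number of remaining masks, running from $L$ down to $0$), showing that the width-$m'$ run dominates the width-$m$ run stage by stage. Throughout, write $S_n(\cdot)$ for the set of surviving nodes at stage $n$ (nodes with $n$ masks that pass pruning) and $C_n(\cdot)$ for the candidate pool at that stage, i.e.\ the union over parents $u\in S_{n+1}(\cdot)$ of the $b(n+1)$ children produced by \textsc{UnmaskBranch}$(u)$ via \Cref{alg:unmask-branch}; by construction $S_n(\cdot)=\mathrm{TopK}_{m(n)}$ of $C_n(\cdot)$ ranked by \textsc{ResubstituteScore} (\Cref{alg:resubtitute-score}) under the prescribed deterministic tie-break. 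The returned reward $r_\star(\cdot)$ is the best score over the nodes the run retains. Since a maximum is monotone under set inclusion, it suffices to show $S_n(m)\subseteq S_n(m')$ for every $n$ (so that, at the stage where the answer is read off, the $m'$-run retains a superset of the $m$-run's nodes, each carrying the same score by the coupling).

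The first step is to fix the coupling precisely: both runs use the same uniforms $u_n$ and the same uniform position draws $\ell$ inside every \textsc{UnmaskBranch} call, indexed by the node at which the call is made, the map $\bm x_\theta$ is deterministic, and the tie-break in every $\mathrm{TopK}$ is fixed. The one consequence I need is that \emph{for any node $v$, the jump time $\tau$, the probabilities $\bm{\mu}$, the set of children, and their \textsc{ResubstituteScore} values are identical across the two runs whenever $v$ is expanded in both}; hence the per-node data along any branch is run-independent and only the identity of the surviving branches can differ.

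The heart of the argument is the induction. Base case $n=L$: the unique stage-$L$ node is the all-mask root, so $S_L(m)=S_L(m')$, and since there is a single parent, $C_{L-1}(m)=C_{L-1}(m')$. Inductive step: assume $S_{n+1}(m)\subseteq S_{n+1}(m')$. Every parent used by the $m$-run at stage $n+1$ is then also a parent in the $m'$-run and, by the coupling, contributes exactly the same scored children, so $C_n(m)\subseteq C_n(m')$. It remains to push containment through pruning, i.e.\ to show $\mathrm{TopK}_{m'(n)}\!\big(C_n(m')\big)\supseteq \mathrm{TopK}_{m(n)}\!\big(C_n(m)\big)$. With a fixed total order on nodes (score plus deterministic tie-break), $\mathrm{TopK}_k$ is simply the length-$k$ prefix of that order; for nested pools and budgets $m'(n)\ge m(n)$, retaining the smaller prefix reduces to a rank-counting statement about how the extra candidates $C_n(m')\setminus C_n(m)$ — which are precisely the children of the lower-ranked parents newly admitted at stage $n+1$ — interleave with $\mathrm{TopK}_{m(n)}(C_n(m))$ relative to the enlarged budget. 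Granting this, $S_n(m)\subseteq S_n(m')$, closing the induction; applying it at the extraction stage and taking the maximum of the (coupled) scores over the retained sets gives $r_\star(m')\ge r_\star(m)$.

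The step I expect to be the real obstacle is exactly this pruning step: in complete generality ``$\mathrm{TopK}$ of a larger set under a larger budget contains $\mathrm{TopK}$ of a contained smaller set'' is \emph{not} true, so the proof must genuinely use how \textsc{TReASURe} enlarges the candidate pool as the width grows (the new candidates descend only from the newly surviving, lower-ranked parents, and the budget increases in tandem) rather than treating pruning as a black box. I would isolate this as a standalone lemma about the pruning operator and prove it by the rank-counting sketch above; the coupling, the base case, and the final maximization are then routine, as is handling the minor edge cases — pools smaller than the requested width (where $\mathrm{TopK}$ returns the whole pool and inclusion is immediate), children shared by distinct parents (handled by viewing pools as sets), and stages admitting no further unmasking. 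Note that, unlike \Cref{thm:unmaskbranch-cost,thm:resubstitute-gap}, this argument is purely combinatorial and uses neither \Cref{assump:lipschitz} nor the scoring-accuracy bound.
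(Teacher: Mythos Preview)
Your proposal is more careful than the paper's own argument: the paper simply asserts that the coupled runs ``produce the \emph{same} candidate pool \(\mathcal{C}_n\) at each level \(n\)'' and then invokes \(\mathrm{TopK}_{m'(n)}(\mathcal{C}_n)\supseteq\mathrm{TopK}_{m(n)}(\mathcal{C}_n)\) for a common \(\mathcal{C}_n\). You correctly note that once the survivor sets diverge the \(m'\)-run expands strictly more parents, so its candidate pool at the next level is strictly larger; the paper's ``same \(\mathcal{C}_n\)'' premise therefore fails beyond the first step, and its argument does not go through as written.

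Unfortunately, your proposed repair has a genuine gap at exactly the point you flag. Maintaining the invariant \(S_n(m)\subseteq S_n(m')\) requires pushing containment through pruning, and you hope to do this by a rank-counting lemma exploiting that the extra candidates in \(C_n(m')\setminus C_n(m)\) descend from the \emph{lower-ranked} extra parents while ``the budget increases in tandem.'' But parent rank (the \textsc{ResubstituteScore} at stage \(n{+}1\)) says nothing about child rank (the score at stage \(n\)), and the budget need not increase at every stage. Concretely, take \(L=3\), \(b(\cdot)\equiv 2\), \(m(n)=1\) for all \(n\), and \(m'(3)=2\), \(m'(2)=m'(1)=1\). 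Let the root's children \(c_1,c_2\) score \(10,9\); let \(c_1\)'s children \(d_1,d_2\) score \(5,4\) and \(c_2\)'s children \(e_1,e_2\) score \(100,99\); let \(d_1\)'s children score \(1000,999\) and \(e_1\)'s children score \(1,0\). After stage \(n=2\) the \(m\)-run holds \(\{d_1\}\) while the \(m'\)-run holds \(\{e_1\}\)---disjoint---and at the end \(r_\star(m)=1000>1=r_\star(m')\). Since the theorem places no constraint on \(\bm{x}_\theta\) or \(r\), such scores are realizable; hence your invariant fails, the rank-counting lemma is false under the hypothesis \(m'(n)\ge m(n)\) alone, and indeed the stated conclusion itself is violated on this instance. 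The obstruction is structural: enlarging the width at one stage injects extra candidates downstream without any guarantee that later budgets absorb them, so neither the paper's proof nor your sketch can be completed without additional hypotheses (for instance \(m'(n)\ge m(n)+b(n)\bigl(m'(n{+}1)-m(n{+}1)\bigr)\), which does make the rank count go through).
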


\Cref{thm:monotone-m} shows that increasing the tree width \(m(n)\), which determines the number of model evaluations, leads to improved final rewards. This trend is verified empirically in~\Cref{sec:experiments}. As a remark, increasing the beam width \(b(n)\) yields stronger local candidates, but this does not guarantee monotonic improvement, since locally better branches do not necessarily lead to higher final rewards.

\begin{algorithm}[htb]
\small
\caption{Tree Reward-Aligned Search with Unmasking and Resubstitution (\textsc{TReASURe})}
\label{alg:treasure}
\begin{algorithmic}[1]
\Require Pretrained MDLM \(\bm{x}_\theta(\bm{z}^{1:L}, t)\); reward \(r(\cdot)\); length \(L\); beam width \(b(\cdot)\); tree width \(m(\cdot)\)
\Ensure Final sequence \(\bm{z}^{1:L}_\star\) and reward \(r_\star\)
\State \(\tau_L \gets 1\), \quad \(\bm{z}_L \gets [\bm{m},\dots,\bm{m}]\)
\State \(\mathcal{S} \gets \{(\bm{z}_L^{1:L},\tau_L)\}\)
\For{\(n=L\) \textbf{down to} \(1\)}
  \State \(\mathcal{C} \gets \varnothing\) \Comment{candidate pool}
  \ForAll{\((\bm{z}_n^{1:L},\tau_n)\in\mathcal{S}\)}
    \State \((\mathcal{Z},\tau_{n-1},\bm{\mu}_n) \gets \Call{UnmaskBranch}{\bm{z}_n^{1:L},\tau_n,b(n)}\) \Comment{call~\Cref{alg:unmask-branch}}
    \ForAll{\(\bm{z}_{n-1}^{1:L}\in\mathcal{Z}\)}
      \State \(r \gets \Call{ResubstituteScore}{\bm{z}_{n-1}^{1:L},\bm{\mu}_n}\) \Comment{call~\Cref{alg:resubtitute-score}}
      \State \(\mathcal{C} \gets \mathcal{C} \cup \{(\bm{z}_{n-1}^{1:L},\tau_{n-1},r)\}\)
    \EndFor
  \EndFor
  \State \(\mathcal{S} \gets \mathrm{TopK}_{m(n)}(\mathcal{C})\) \Comment{keep best \(m(n)\) nodes}
\EndFor
\State \((\bm{z}^{1:L}_\star, \tau_0, r_\star) \gets \arg\max_{(\bm{z}_0^{1:L},\tau_0,r)\in\mathcal{S}} r\)
\State \Return \(\bm{z}^{1:L}_\star\), \(r_\star\)
\end{algorithmic}
\end{algorithm}

\section{Experiments}
\label{sec:experiments}

In this section, we evaluate the performance of \textsc{TReASURe} on TTA for controllable text generation. All experiments are run on a single NVIDIA A100 GPU. Further experimental settings and additional results are provided in~\Cref{app:experimental_details} and~\Cref{app:more_experiments}.

\subsection{Controllable Text Generation}

\paragraph{Experimental settings.} 
We adopt the MDLM implementation by~\citet{sahoo2024simple} as our base model. To ensure a fair comparison across different TTA approaches, we follow the experimental protocol described in~\citet{han2023ssd,singhal2025general} and fix the number of denoising steps to \(1{,}000\) for all the baseline methods. For each controllable prompt, we generate \(20\) continuations of length \(128\) tokens using \(15\) prompts, consistent with prior work. We report the NFE, defined as the total number of forward passes through the pretrained MDLM during generation, as the primary measure of test-time compute cost. Following prior work, we ignore the computational cost of the reward model, as it is shared across all methods and does not affect relative efficiency comparisons. More implementation details and hyperparameter configurations are provided in~\Cref{app:experimental_details}.

\paragraph{Baselines.} 
To evaluate the effectiveness of \textsc{TReASURe}, we compare it against several representative TTA approaches:
\begin{itemize}[leftmargin=*]
\item \textbf{Base model sampling:} This baseline directly generates candidate outputs from a pre-trained MDLM without applying any additional optimization.
\item \textbf{Best-of-\(N\) (BoN):} This approach samples \(N\) candidate sequences from the same pre-trained model and selects the one achieving the highest reward.
\item \textbf{Feynman--Kac (FK) steering~\citep{singhal2025general}:} 
A Sequential Monte Carlo (SMC)-based approach that maintains particles during generation and resamples them by reward-weighted importance, thereby improving sample efficiency and alignment.
\end{itemize}

\paragraph{Rewards and metrics.} 
For controllable text generation, we consider four downstream reward functions:
\textbf{(\romannumeral1)} \textbf{Perplexity} (\textsc{Gen}.~\textsc{PPL}): Computed using GPT-2~\citep{radford2019language}, this metric encourages generations that are more likely under a pretrained language model.
\textbf{(\romannumeral2)} \textbf{Linguistic acceptability} (\textsc{CoLA}): Based on a classifier~\citep{morris2020textattack} trained on the CoLA dataset~\citep{warstadt2019neural}, this reward favors sentences that are grammatically well-formed.
\textbf{(\romannumeral3)} \textbf{Toxicity score} (\textsc{Toxicity}): Using a toxicity detection classifier~\citep{logacheva2022paradetox} trained to identify harmful or offensive content, this reward assesses model vulnerabilities and penalizes toxic outputs.
\textbf{(\romannumeral4)} \textbf{Sentiment score} (\textsc{Sentiment}): Leveraging a sentiment classifier~\citep{barbieri2020tweeteval} trained on social media data, this reward guides the model toward producing outputs with the desired sentiment (e.g., positive). We evaluate model performance using the aforementioned reward functions as evaluation metrics, and additionally measure diversity (deferred to~\Cref{app:more_experiments}) for a comprehensive assessment.

\subsection{Experimental Results}

\paragraph{Comparison with baseline methods.} 
We measure compute in terms of NFE. For BoN and FK-steering, following prior work, we fix the denoising steps at \(T=1{,}000\) and vary the per-step NFE across \(\{1, 2, 4, 6, 8, 16\}\), leading to total budgets of \(1{,}000 \times \{1, 2, 4, 6, 8, 16\}\) evaluations. Note that in \textsc{TReASURe}, the number of denoising steps naturally scales with sequence length (\(128\) tokens in our setting), so total budgets are smaller but directly comparable in terms of per-step NFE. \Cref{tab:main} reports results on controllable text generation across four reward functions: \textsc{CoLA}, \textsc{Toxicity}, \textsc{Sentiment}, and \textsc{Gen}.~\textsc{PPL}. \textsc{TReASURe} consistently achieves state-of-the-art performance across all tasks. In the low-NFE regime, where compute is scarce, it already surpasses the best baseline results obtained at much higher NFEs. As NFE increases, performance improves steadily, as predicted by \Cref{thm:monotone-m}, and \textsc{TReASURe} dominates all baselines at every budget.

\begin{table}[t]
\centering
\small
\renewcommand{\arraystretch}{1} 
\setlength{\tabcolsep}{8pt}       
\begin{tabular}{lccccc}
\toprule
\textsc{Method} & \textsc{NFE} & \textsc{CoLA} \(\uparrow\) & \textsc{Toxicity} \(\uparrow\) & \textsc{Sentiment} \(\uparrow\) & \textsc{Gen}.~\textsc{PPL} \(\downarrow\) \\
\midrule
MDLMs & \(1\) & \(25.56\) & \(0.89\) & \(12.44\) & \(80.58\) \\
\midrule
\multirow{5}{*}{BoN}
    & \(2\) & \(43.16\) & \(0.70\) & \(22.11\) & \(70.58\) \\
    & \(4\) & \(63.57\) & \(2.44\) & \(32.89\) & \(55.47\) \\
    & \(6\) & \(65.96\) & \(2.57\) & \(45.50\) & \(52.23\) \\
    & \(8\) & \(73.11\) & \(6.67\) & \(48.44\) & \(47.91\) \\
    & \(16\) & \(77.56\) & \(11.11\) & \(65.56\) & \(42.46\) \\
\midrule
\multirow{5}{*}{FK-steering} 
    & \(2\) & \(45.96\) & \(1.05\) & \(20.00\) & \(66.79\) \\
    & \(4\) & \(66.62\) & \(1.56\) & \(36.33\) & \(56.36\) \\
    & \(6\) & \(69.82\) & \(3.16\) & \(41.05\) & \(50.16\) \\
    & \(8\) & \(72.67\) & \(4.00\) & \(49.33\) & \(46.60\) \\
    & \(16\) & \(76.44\) & \(9.67\) & \(61.33\) & \(41.56\) \\
\midrule
\multirow{5}{*}{\textsc{TReASURe}}  
    & \(2\) & \(77.67\) & \(64.00\) & \(98.67\) & \(15.37\) \\
    & \(4\) & \(84.22\) & \(93.33\) & \(98.90\) & \(9.22\) \\
    & \(6\) & \(89.19\) & \(96.60\) & \(99.11\) & \(7.60\) \\
    & \(8\) & \(93.33\) & \(100.00\) & \(100.00\) & \(6.60\) \\
    & \(16\) & \(\textbf{98.35}\) & \(\textbf{100.00}\) & \(\textbf{100.00}\) & \(\textbf{5.11}\) \\
\bottomrule
\end{tabular}
\caption{\textbf{Main results on TTA for controllable text generation with MDLMs.} We compare the base MDLM, Best-of-\(N\) (BoN), FK-steering, and \textsc{TReASURe} (ours) across four reward functions: \textsc{CoLA}, \textsc{Toxicity}, \textsc{Sentiment}, and \textsc{Gen}.~\textsc{PPL}. \textsc{TReASURe} consistently outperforms all baselines, with especially large gains in low-NFE regimes and continued improvements as NFE increases. Arrows indicate whether higher (\(\uparrow\)) or lower (\(\downarrow\)) values are preferred. We remark that while lower \textsc{Toxicity} is generally more desirable, increasing it also serves as a valid TTA target for benchmarking purposes.}
\label{tab:main}
\end{table}

\paragraph{Visualization of reward trajectories.} To understand how TTA interacts with MDLM denoising under \textsc{TReASURe}, we plot reward trajectories across denoising steps from \(10\) independent trials, with each colored curve corresponding to one trial. As shown in \cref{fig:reward_trajectory}, rewards rise steadily during denoising. This reveals two properties of MDLM generation. First, the trajectory exhibits progressive refinement: early steps reconstruct coarse structures, while later ones refine them into coherent text. \textsc{TReASURe} exploits this by branching on alternative continuations and pruning low-reward ones. Second, the near-monotonic reward increase indicates that \textsc{TReASURe} not only improves final outputs but also steers intermediate states toward more desirable regions of the hypothesis space. Overall, \textsc{TReASURe} provides a stable mechanism for aligning MDLM outputs with reward signals throughout the denoising process.

\begin{figure}[htb]
\centering
\includegraphics[width=0.32\textwidth]{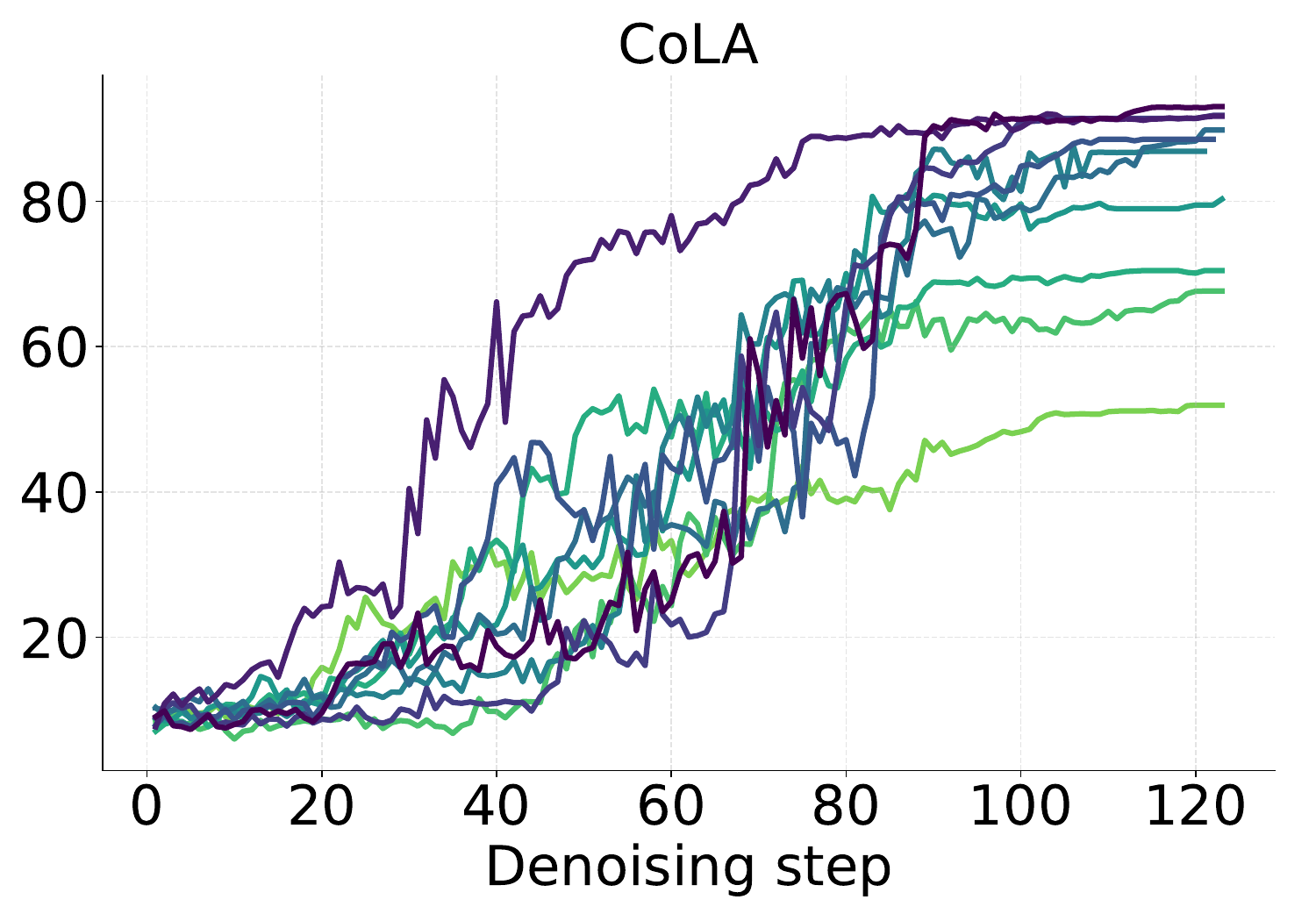}
\includegraphics[width=0.32\textwidth]{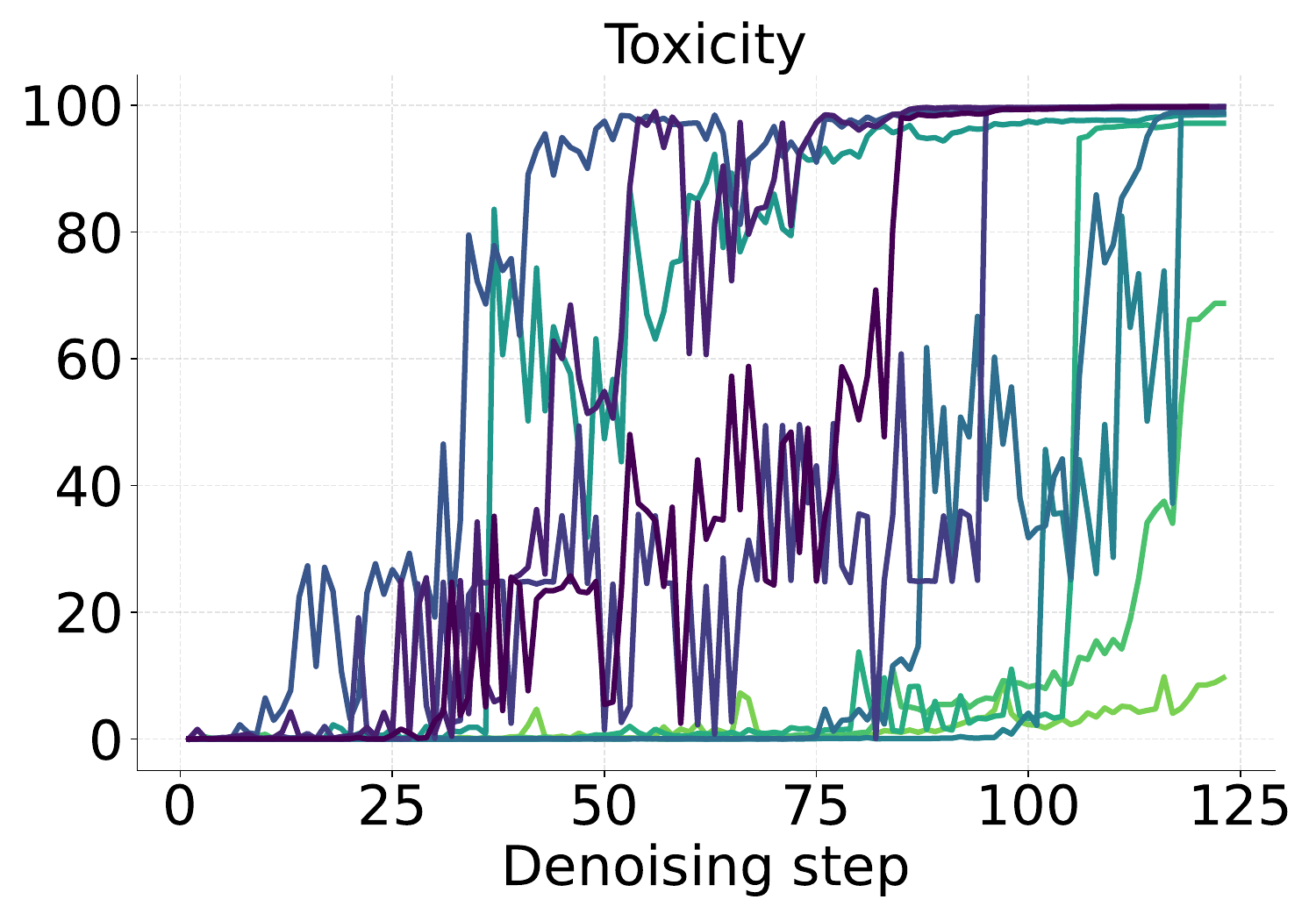}
\includegraphics[width=0.32\textwidth]{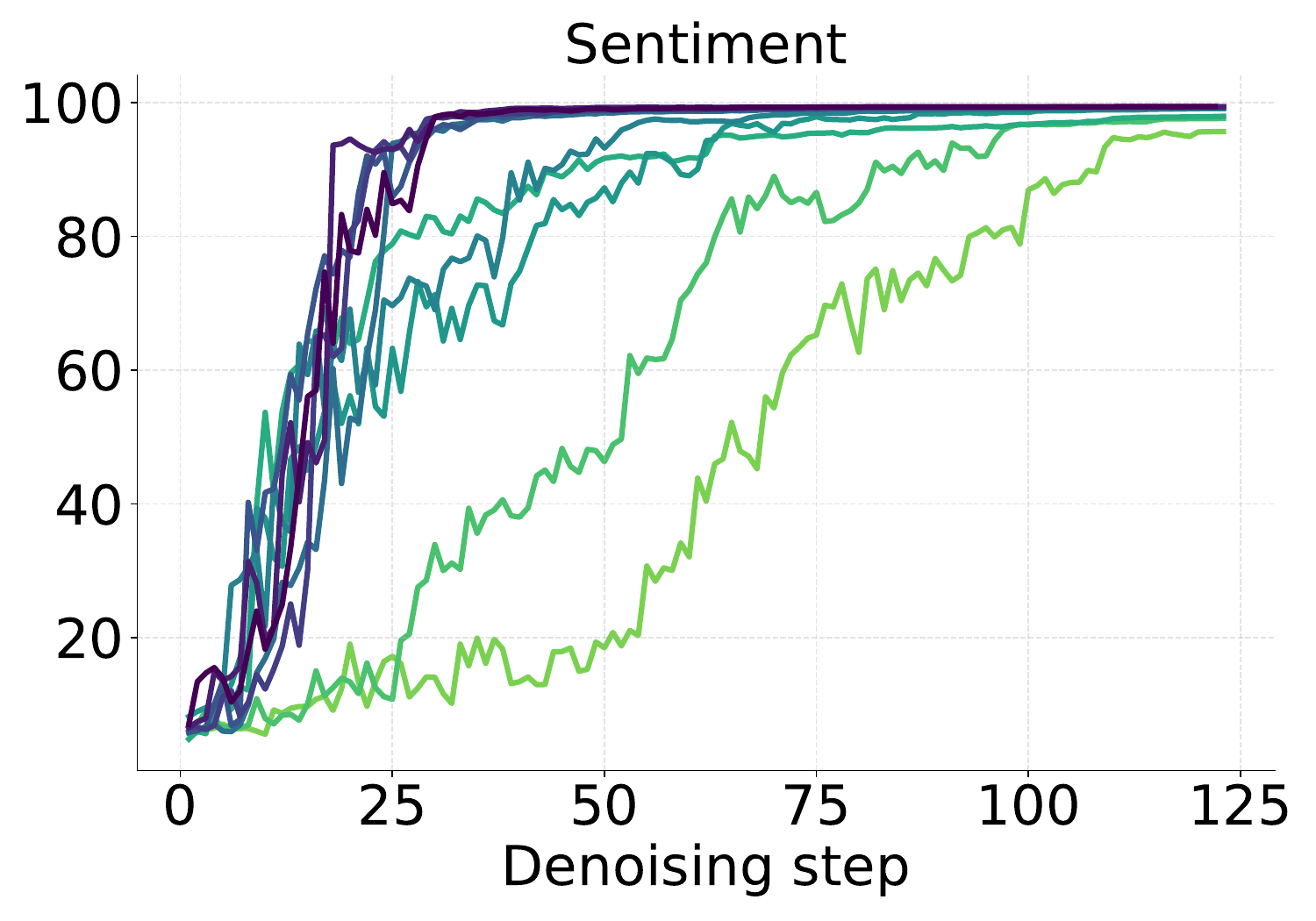}
\caption{\textbf{Reward trajectories across denoising steps over \(10\) independent trials.} We show the evolution of rewards for three task-specific objectives (\textsc{CoLA}, \textsc{Toxicity}, and \textsc{Sentiment}) (colors are sorted by final reward, but do not indicate temporal order). All three exhibit a generally increasing trend, illustrating progressive refinement during denoising and supporting the effectiveness of \textsc{TReASURe}. We omit \textsc{Gen}.~\textsc{PPL}, as perplexity is computed before the first \texttt{<EOS>} token and thus depends on variable sentence lengths, making step-wise comparisons inconsistent.}
\label{fig:reward_trajectory}
\vspace{-1em}
\end{figure}

\subsection{Ablation Studies}

\paragraph{Effectiveness of \textsc{ResubstituteScore}.}  
To validate the effectiveness of deterministic scoring in~\Cref{alg:resubtitute-score}, where the masked positions in \(\bm{z}_{n-1}^{1:L}\) are replaced by the \(\arg\max\) tokens of \(\bm{x}_\theta(\bm{z}_n^{1:L}, \tau_{n-1})\), we compare against two natural alternatives: (\romannumeral1) \emph{previous-step scoring}, which approximates the score using the posterior at the previous step, i.e., \(r(\bm{z}_n^{1:L})\). This corresponds to the approach of~\citet{singhal2025general}, but ignores information from the current step and yields high-variance estimates; and (\romannumeral2) \emph{true posterior scoring}, which estimates the score from the true posterior \(r(\bm{z}_{n-1}^{1:L})\) as in~\citet{chung2023diffusion}. While unbiased, this method requires additional NFEs per child node and thus incurs significant computational overhead. As shown in \Cref{tab:ablation}, \textsc{ResubstituteScore} consistently outperforms the first approach under matched NFE, and achieves performance comparable to the second approach while requiring substantially fewer evaluations.

\begin{table}[t]
\centering
\small
\renewcommand{\arraystretch}{1}
\setlength{\tabcolsep}{8pt}
\begin{tabular}{lccccc}
\toprule
\textsc{Method} & \textsc{NFE}  & \textsc{CoLA} $\uparrow$ & \textsc{Toxicity} $\uparrow$ & \textsc{Sentiment} $\uparrow$ & \textsc{Gen}.~\textsc{PPL} $\downarrow$ \\
\midrule
\rowcolor{gray!15}
& \(2\) & \(77.67\) & \(64.00\) & \(98.67\) & \(15.37\) \\
\rowcolor{gray!15}
\textsc{TReASURe} & \(4\) & \(84.22\) & \(93.33\) & \(98.90\) & \(9.22\) \\
\rowcolor{gray!15}
                    & \(6\) & \(89.19\) & \(96.60\) & \(99.11\) & \(7.60\) \\
\midrule
\multirow{3}{*}{\emph{previous-step scoring}}
  & \(2\) & \(45.01\) & \(26.67\) & \(93.33\) & \(60.14\) \\
  & \(4\) & \(74.44\) & \(68.89\) & \(95.10\) & \(20.81\) \\
  & \(6\) & \(73.33\) & \(73.94\) & \(97.27\) & \(20.03\) \\
\midrule
\multirow{3}{*}{\emph{true posterior scoring}}
  & \(10\;(=5\times2)\) & \(84.33\) & \(93.33\) & \(98.79\) & \(10.09\) \\
  & \(20\;(=5\times4)\) & \(93.33\) & \(100.00\) & \(100.00\) & \(7.30\) \\
  & \(48\;(=8\times6)\) & \(93.33\) & \(100.00\) & \(100.00\) & \(5.11\) \\
\bottomrule
\end{tabular}
\caption{\textbf{Ablation study on \textsc{ResubstituteScore}.} We compare the full \textsc{TReASURe} model (gray rows, copied from \Cref{tab:main}) against two variants: 
(\romannumeral1) \emph{previous-step scoring}, which reuses the reward from the prior step, and (\romannumeral2) \emph{true posterior scoring}, which evaluates the exact posterior but inflates NFE (e.g., \(10=5\;(\text{beam width})\times2\;(\text{tree width/NFE})\)). 
\textsc{ResubstituteScore} achieves the best trade-off, outperforming (\romannumeral1) under matched NFE and matching (\romannumeral2) with far fewer evaluations.}
\label{tab:ablation}
\end{table}

\paragraph{Effectiveness of beam width.} We further study the role of beam width while keeping the tree width fixed at \(4\), so that the per-step NFE remains constant. 
Unlike tree width (\Cref{thm:monotone-m}), larger beam width does not guarantee better rewards. As shown in \Cref{fig:beam_width}, the reward may fluctuate as beam width grows, highlighting that beam search alone cannot reliably ensure better alignment under fixed tree width.

\begin{figure}[htb]
\centering
\includegraphics[width=\linewidth]{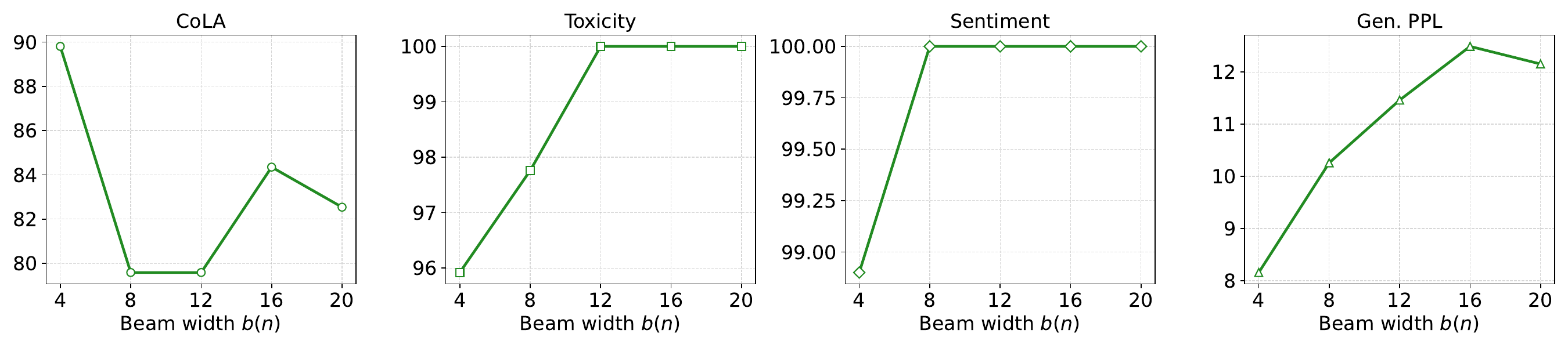}
\caption{\textbf{Effect of beam width on reward.} 
We fix the tree width to \(4\), keeping per-step NFE constant, and vary the beam width. In contrast to tree width (cf.~\Cref{thm:monotone-m}), increasing beam width does not guarantee better rewards, and performance may plateau or even degrade.}
\label{fig:beam_width}
\end{figure}

\section{Conclusion}

In this work, we proposed \textsc{TReASURe}, a tree-search method for test-time alignment in masked diffusion language models. It addresses the two central challenges for tree search in this setting: low-diversity branching under na\"ive parallel sampling and high-variance reward estimates due to distributional output. By introducing \textsc{UnmaskBranch}, which branches only at first-hitting unmask events, and \textsc{ResubstituteScore}, which prunes via deterministic resubstitution, \textsc{TReASURe} achieves stable and compute-efficient alignment. Our analysis characterizes why these strategies are reliable, and our experiments demonstrate strong gains across diverse controllable generation tasks. Future work includes extending the approach to long-context and multimodal MDLMs, and developing theoretically-grounded adaptive schedules that better balance exploration and efficiency.

\section*{Ethics Statement}

Our study uses only publicly available datasets and does not involve human participants. We are aware of potential misuse of machine learning models and emphasize that our contributions should be applied in a safe, fair, and responsible manner.

\section*{Reproducibility Statement}

The code for implementing \textsc{TReASURe} will be released publicly upon publication. Theoretical results are stated with assumptions specified and complete proofs provided in~\Cref{app:theoretical_guarantee_Of_treasure}. Datasets are standard public benchmarks, and details of preprocessing and experimental settings are included in~\Cref{app:experimental_details}.

\bibliography{iclr/iclr2026_conference}

\begin{thebibliography}{61}
\providecommand{\natexlab}[1]{#1}
\providecommand{\url}[1]{\texttt{#1}}
\expandafter\ifx\csname urlstyle\endcsname\relax
  \providecommand{\doi}[1]{doi: #1}\else
  \providecommand{\doi}{doi: \begingroup \urlstyle{rm}\Url}\fi

\bibitem[Achiam et~al.(2023)Achiam, Adler, Agarwal, Ahmad, Akkaya, Aleman, Almeida, Altenschmidt, Altman, Anadkat, et~al.]{achiam2023gpt}
Josh Achiam, Steven Adler, Sandhini Agarwal, Lama Ahmad, Ilge Akkaya, Florencia~Leoni Aleman, Diogo Almeida, Janko Altenschmidt, Sam Altman, Shyamal Anadkat, et~al.
\newblock {GPT}-4 technical report.
\newblock \emph{arXiv preprint arXiv:2303.08774}, 2023.

\bibitem[Austin et~al.(2021)Austin, Johnson, Ho, Tarlow, and Van Den~Berg]{austin2021structured}
Jacob Austin, Daniel~D Johnson, Jonathan Ho, Daniel Tarlow, and Rianne Van Den~Berg.
\newblock Structured denoising diffusion models in discrete state-spaces.
\newblock In \emph{Advances in Neural Information Processing Systems}, 2021.

\bibitem[Barbieri et~al.(2020)Barbieri, Camacho-Collados, Neves, and Espinosa-Anke]{barbieri2020tweeteval}
Francesco Barbieri, Jose Camacho-Collados, Leonardo Neves, and Luis Espinosa-Anke.
\newblock Tweet{E}val: Unified benchmark and comparative evaluation for tweet classification.
\newblock In \emph{Findings of the Association for Computational Linguistics}, 2020.

\bibitem[Brown et~al.(2020)Brown, Mann, Ryder, Subbiah, Kaplan, Dhariwal, Neelakantan, Shyam, Sastry, Askell, et~al.]{brown2020language}
Tom Brown, Benjamin Mann, Nick Ryder, Melanie Subbiah, Jared~D Kaplan, Prafulla Dhariwal, Arvind Neelakantan, Pranav Shyam, Girish Sastry, Amanda Askell, et~al.
\newblock Language models are few-shot learners.
\newblock \emph{Advances in Neural Information Processing Systems}, 2020.

\bibitem[Campbell et~al.(2022)Campbell, Benton, De~Bortoli, Rainforth, Deligiannidis, and Doucet]{campbell2022continuous}
Andrew Campbell, Joe Benton, Valentin De~Bortoli, Thomas Rainforth, George Deligiannidis, and Arnaud Doucet.
\newblock A continuous time framework for discrete denoising models.
\newblock In \emph{Advances in Neural Information Processing Systems}, 2022.

\bibitem[Cardoso et~al.(2023)Cardoso, Idrissi, Corff, and Moulines]{cardoso2023monte}
Gabriel Cardoso, Yazid Janati~El Idrissi, Sylvain~Le Corff, and Eric Moulines.
\newblock Monte {C}arlo guided diffusion for bayesian linear inverse problems.
\newblock \emph{arXiv preprint arXiv:2308.07983}, 2023.

\bibitem[Chen et~al.(2024{\natexlab{a}})Chen, Liao, Li, and Fan]{chen2024alphamath}
Guoxin Chen, Minpeng Liao, Chengxi Li, and Kai Fan.
\newblock Alpha{M}ath almost zero: process supervision without process.
\newblock \emph{Advances in Neural Information Processing Systems}, 2024{\natexlab{a}}.

\bibitem[Chen et~al.(2024{\natexlab{b}})Chen, Wu, Wang, Su, Chen, Xing, Zhong, Zhang, Zhu, Lu, Li, Luo, Lu, Qiao, and Dai]{chen2024internvl}
Zhe Chen, Jiannan Wu, Wenhai Wang, Weijie Su, Guo Chen, Sen Xing, Muyan Zhong, Qinglong Zhang, Xizhou Zhu, Lewei Lu, Bin Li, Ping Luo, Tong Lu, Yu~Qiao, and Jifeng Dai.
\newblock Intern{VL}: Scaling up vision foundation models and aligning for generic visual-linguistic tasks.
\newblock In \emph{Proceedings of the IEEE/CVF Conference on Computer Vision and Pattern Recognition}, 2024{\natexlab{b}}.

\bibitem[Chung et~al.(2023)Chung, Kim, Mccann, Klasky, and Ye]{chung2023diffusion}
Hyungjin Chung, Jeongsol Kim, Michael~T Mccann, Marc~L Klasky, and Jong~Chul Ye.
\newblock Diffusion posterior sampling for general noisy inverse problems.
\newblock In \emph{International Conference on Learning Representations}, 2023.

\bibitem[Dang et~al.(2025)Dang, Han, Xu, Xu, Srivastava, and Ermon]{dang2025inference}
Meihua Dang, Jiaqi Han, Minkai Xu, Kai Xu, Akash Srivastava, and Stefano Ermon.
\newblock Inference-time scaling of diffusion language models with particle {G}ibbs sampling.
\newblock \emph{arXiv preprint arXiv:2507.08390}, 2025.

\bibitem[DeepMind(2024)]{gemini-thinking}
Google DeepMind.
\newblock Gemini 2.0 flash thinking, 2024.
\newblock URL \url{https://deepmind.google/technologies/gemini/flash-thinking/}.

\bibitem[Dou \& Song(2024)Dou and Song]{dou2024diffusion}
Zehao Dou and Yang Song.
\newblock Diffusion posterior sampling for linear inverse problem solving: A filtering perspective.
\newblock In \emph{International Conference on Learning Representations}, 2024.

\bibitem[Driess et~al.(2023)Driess, Xia, Sajjadi, Lynch, Chowdhery, Wahid, Tompson, Vuong, Yu, Huang, et~al.]{driess2023palm}
Danny Driess, Fei Xia, Mehdi~SM Sajjadi, Corey Lynch, Aakanksha Chowdhery, Ayzaan Wahid, Jonathan Tompson, Quan Vuong, Tianhe Yu, Wenlong Huang, et~al.
\newblock Pa{LM}-e: An embodied multimodal language model.
\newblock In \emph{International Conference on Machine Learning}, 2023.

\bibitem[Dubey et~al.(2024)Dubey, Jauhri, Pandey, Kadian, Al-Dahle, Letman, Mathur, Schelten, Yang, Fan, et~al.]{dubey2024llama}
Abhimanyu Dubey, Abhinav Jauhri, Abhinav Pandey, Abhishek Kadian, Ahmad Al-Dahle, Aiesha Letman, Akhil Mathur, Alan Schelten, Amy Yang, Angela Fan, et~al.
\newblock The {LL}a{MA} 3 herd of models.
\newblock \emph{arXiv preprint arXiv:2407.21783}, 2024.

\bibitem[Faria \& Smith(2025)Faria and Smith]{faria2025sample}
Gon{\c{c}}alo Faria and Noah~A Smith.
\newblock Sample, don't search: Rethinking test-time alignment for language models.
\newblock \emph{arXiv preprint arXiv:2504.03790}, 2025.

\bibitem[Gao \& Li(2024)Gao and Li]{gao2024flow}
Weiguo Gao and Ming Li.
\newblock How do flow matching models memorize and generalize in sample data subspaces?
\newblock \emph{arXiv preprint arXiv:2410.23594}, 2024.

\bibitem[Gao \& Li(2025)Gao and Li]{gao2025toward}
Weiguo Gao and Ming Li.
\newblock Toward theoretical insights into diffusion trajectory distillation via operator merging.
\newblock \emph{arXiv preprint arXiv:2505.16024}, 2025.

\bibitem[Gong et~al.(2024)Gong, Agarwal, Zhang, Ye, Zheng, Li, An, Zhao, Bi, Han, et~al.]{gong2024scaling}
Shansan Gong, Shivam Agarwal, Yizhe Zhang, Jiacheng Ye, Lin Zheng, Mukai Li, Chenxin An, Peilin Zhao, Wei Bi, Jiawei Han, et~al.
\newblock Scaling diffusion language models via adaptation from autoregressive models.
\newblock \emph{arXiv preprint arXiv:2410.17891}, 2024.

\bibitem[Guo et~al.(2025{\natexlab{a}})Guo, Yang, Zhang, Song, Zhang, Xu, Zhu, Ma, Wang, Bi, et~al.]{guo2025deepseek}
Daya Guo, Dejian Yang, Haowei Zhang, Junxiao Song, Ruoyu Zhang, Runxin Xu, Qihao Zhu, Shirong Ma, Peiyi Wang, Xiao Bi, et~al.
\newblock Deep{S}eek-{R}1: Incentivizing reasoning capability in {LLM}s via reinforcement learning.
\newblock \emph{arXiv preprint arXiv:2501.12948}, 2025{\natexlab{a}}.

\bibitem[Guo et~al.(2025{\natexlab{b}})Guo, Yang, Yuan, and Wang]{guo2025training}
Yingqing Guo, Yukang Yang, Hui Yuan, and Mengdi Wang.
\newblock Training-free guidance beyond differentiability: Scalable path steering with tree search in diffusion and flow models.
\newblock \emph{arXiv preprint arXiv:2502.11420}, 2025{\natexlab{b}}.

\bibitem[Han et~al.(2023)Han, Kumar, and Tsvetkov]{han2023ssd}
Xiaochuang Han, Sachin Kumar, and Yulia Tsvetkov.
\newblock {SSD}-{LM}: Semi-autoregressive simplex-based diffusion language model for text generation and modular control.
\newblock In \emph{Proceedings of the 61st Annual Meeting of the Association for Computational Linguistics (Volume 1: Long Papers)}, 2023.

\bibitem[Ho et~al.(2020)Ho, Jain, and Abbeel]{ho2020denoising}
Jonathan Ho, Ajay Jain, and Pieter Abbeel.
\newblock Denoising diffusion probabilistic models.
\newblock In \emph{Advances in Neural Information Processing Systems}, 2020.

\bibitem[Kim et~al.(2025)Kim, Kim, and Park]{kim2025test}
Sunwoo Kim, Minkyu Kim, and Dongmin Park.
\newblock Test-time alignment of diffusion models without reward over-optimization.
\newblock In \emph{International COnference on Learning Representations}, 2025.

\bibitem[Labs et~al.(2025)Labs, Khanna, Kharbanda, Li, Varma, Wang, Birnbaum, Luo, Miraoui, Palrecha, et~al.]{labs2025mercury}
Inception Labs, Samar Khanna, Siddhant Kharbanda, Shufan Li, Harshit Varma, Eric Wang, Sawyer Birnbaum, Ziyang Luo, Yanis Miraoui, Akash Palrecha, et~al.
\newblock Mercury: Ultra-fast language models based on diffusion.
\newblock \emph{arXiv preprint arXiv:2506.17298}, 2025.

\bibitem[Li et~al.(2024)Li, Zhao, Wang, Scalia, Eraslan, Nair, Biancalani, Ji, Regev, Levine, et~al.]{li2024derivative}
Xiner Li, Yulai Zhao, Chenyu Wang, Gabriele Scalia, Gokcen Eraslan, Surag Nair, Tommaso Biancalani, Shuiwang Ji, Aviv Regev, Sergey Levine, et~al.
\newblock Derivative-free guidance in continuous and discrete diffusion models with soft value-based decoding.
\newblock \emph{arXiv preprint arXiv:2408.08252}, 2024.

\bibitem[Li et~al.(2025)Li, Uehara, Su, Scalia, Biancalani, Regev, Levine, and Ji]{li2025dynamic}
Xiner Li, Masatoshi Uehara, Xingyu Su, Gabriele Scalia, Tommaso Biancalani, Aviv Regev, Sergey Levine, and Shuiwang Ji.
\newblock Dynamic search for inference-time alignment in diffusion models.
\newblock \emph{arXiv preprint arXiv:2503.02039}, 2025.

\bibitem[Liu et~al.(2023)Liu, Li, Wu, and Lee]{liu2023visual}
Haotian Liu, Chunyuan Li, Qingyang Wu, and Yong~Jae Lee.
\newblock Visual instruction tuning.
\newblock \emph{Advances in Neural Information Processing Systems}, 2023.

\bibitem[Liu et~al.(2024)Liu, Wang, Wu, Li, Lv, Ling, Zhu, Zhang, Zheng, and Huang]{liu2024aligning}
Wenhao Liu, Xiaohua Wang, Muling Wu, Tianlong Li, Changze Lv, Zixuan Ling, Jianhao Zhu, Cenyuan Zhang, Xiaoqing Zheng, and Xuanjing Huang.
\newblock Aligning large language models with human preferences through representation engineering.
\newblock In \emph{Proceedings of the 62nd Annual Meeting of the Association for Computational Linguistics (Volume 1: Long Papers)}, 2024.

\bibitem[Logacheva et~al.(2022)Logacheva, Dementieva, Ustyantsev, Moskovskiy, Dale, Krotova, Semenov, and Panchenko]{logacheva2022paradetox}
Varvara Logacheva, Daryna Dementieva, Sergey Ustyantsev, Daniil Moskovskiy, David Dale, Irina Krotova, Nikita Semenov, and Alexander Panchenko.
\newblock Para{D}etox: Detoxification with parallel data.
\newblock In \emph{Proceedings of the 60th Annual Meeting of the Association for Computational Linguistics (Volume 1: Long Papers)}, 2022.

\bibitem[Lou et~al.(2024)Lou, Meng, and Ermon]{lou2024discrete}
Aaron Lou, Chenlin Meng, and Stefano Ermon.
\newblock Discrete diffusion modeling by estimating the ratios of the data distribution.
\newblock In \emph{International Conference on Machine Learning}, 2024.

\bibitem[Lu et~al.(2022)Lu, Welleck, Hessel, Jiang, Qin, West, Ammanabrolu, and Choi]{lu2022quark}
Ximing Lu, Sean Welleck, Jack Hessel, Liwei Jiang, Lianhui Qin, Peter West, Prithviraj Ammanabrolu, and Yejin Choi.
\newblock Quark: Controllable text generation with reinforced unlearning.
\newblock In \emph{Advances in neural information processing systems}, 2022.

\bibitem[Morris et~al.(2020)Morris, Lifland, Yoo, Grigsby, Jin, and Qi]{morris2020textattack}
John~X Morris, Eli Lifland, Jin~Yong Yoo, Jake Grigsby, Di~Jin, and Yanjun Qi.
\newblock Textattack: A framework for adversarial attacks, data augmentation, and adversarial training in {NLP}.
\newblock In \emph{Empirical Methods in Natural Language Processing: System Demonstrations}, 2020.

\bibitem[Nie et~al.(2025)Nie, Zhu, You, Zhang, Ou, Hu, Zhou, Lin, Wen, and Li]{nie2025large}
Shen Nie, Fengqi Zhu, Zebin You, Xiaolu Zhang, Jingyang Ou, Jun Hu, Jun Zhou, Yankai Lin, Ji-Rong Wen, and Chongxuan Li.
\newblock Large language diffusion models.
\newblock \emph{arXiv preprint arXiv:2502.09992}, 2025.

\bibitem[OpenAI(2024)]{o1}
OpenAI.
\newblock Learning to reason with {LLM}s, 2024.
\newblock URL \url{https://openai.com/index/learning-to-reason-with-llms/}.

\bibitem[Pani et~al.(2025)Pani, Ou, and Li]{pani2025test}
Chinmay Pani, Zijing Ou, and Yingzhen Li.
\newblock Test-time alignment of discrete diffusion models with sequential {M}onte {C}arlo.
\newblock \emph{arXiv preprint arXiv:2505.22524}, 2025.

\bibitem[Radford et~al.(2019)Radford, Wu, Child, Luan, Amodei, Sutskever, et~al.]{radford2019language}
Alec Radford, Jeffrey Wu, Rewon Child, David Luan, Dario Amodei, Ilya Sutskever, et~al.
\newblock Language models are unsupervised multitask learners.
\newblock \emph{OpenAI Blog}, 2019.

\bibitem[Ramesh \& Mardani(2025)Ramesh and Mardani]{ramesh2025test}
Vignav Ramesh and Morteza Mardani.
\newblock Test-time scaling of diffusion models via noise trajectory search.
\newblock \emph{arXiv preprint arXiv:2506.03164}, 2025.

\bibitem[Sahoo et~al.(2024)Sahoo, Arriola, Schiff, Gokaslan, Marroquin, Chiu, Rush, and Kuleshov]{sahoo2024simple}
Subham Sahoo, Marianne Arriola, Yair Schiff, Aaron Gokaslan, Edgar Marroquin, Justin Chiu, Alexander Rush, and Volodymyr Kuleshov.
\newblock Simple and effective masked diffusion language models.
\newblock In \emph{Advances in Neural Information Processing Systems}, 2024.

\bibitem[Shao et~al.(2024)Shao, Wang, Zhu, Xu, Song, Bi, Zhang, Zhang, Li, Wu, and Guo]{shao2024deepseekmath}
Zhihong Shao, Peiyi Wang, Qihao Zhu, Runxin Xu, Junxiao Song, Xiao Bi, Haowei Zhang, Mingchuan Zhang, YK~Li, Yang Wu, and Daya Guo.
\newblock Deep{S}eek{M}ath: Pushing the limits of {M}athematical reasoning in open language models.
\newblock \emph{arXiv preprint arXiv:2402.03300}, 2024.

\bibitem[Shi et~al.(2024)Shi, Han, Wang, Doucet, and Titsias]{shi2024simplified}
Jiaxin Shi, Kehang Han, Zhe Wang, Arnaud Doucet, and Michalis Titsias.
\newblock Simplified and generalized masked diffusion for discrete data.
\newblock In \emph{Advances in Neural Information Processing Systems}, 2024.

\bibitem[Singh et~al.(2025)Singh, Mukherjee, Beirami, and Jamali-Rad]{singh2025code}
Anuj Singh, Sayak Mukherjee, Ahmad Beirami, and Hadi Jamali-Rad.
\newblock Co{D}e: Blockwise control for denoising diffusion models.
\newblock \emph{arXiv preprint arXiv:2502.00968}, 2025.

\bibitem[Singhal et~al.(2025)Singhal, Horvitz, Teehan, Ren, Yu, McKeown, and Ranganath]{singhal2025general}
Raghav Singhal, Zachary Horvitz, Ryan Teehan, Mengye Ren, Zhou Yu, Kathleen McKeown, and Rajesh Ranganath.
\newblock A general framework for inference-time scaling and steering of diffusion models.
\newblock \emph{arXiv preprint arXiv:2501.06848}, 2025.

\bibitem[Stiennon et~al.(2020)Stiennon, Ouyang, Wu, Ziegler, Lowe, Voss, Radford, Amodei, and Christiano]{stiennon2020learning}
Nisan Stiennon, Long Ouyang, Jeffrey Wu, Daniel Ziegler, Ryan Lowe, Chelsea Voss, Alec Radford, Dario Amodei, and Paul~F Christiano.
\newblock Learning to summarize with human feedback.
\newblock \emph{Advances in Neural Information Processing Systems}, 2020.

\bibitem[Tang et~al.(2024)Tang, Ruiz, Chu, Li, Holynski, Jacobs, Hariharan, Pritch, Wadhwa, Aberman, et~al.]{tang2024realfill}
Luming Tang, Nataniel Ruiz, Qinghao Chu, Yuanzhen Li, Aleksander Holynski, David~E Jacobs, Bharath Hariharan, Yael Pritch, Neal Wadhwa, Kfir Aberman, et~al.
\newblock Real{Fill}: Reference-driven generation for authentic image completion.
\newblock \emph{ACM Transactions on Graphics (TOG)}, 43\penalty0 (4):\penalty0 1--12, 2024.

\bibitem[Team(2024)]{qwen2.5}
Qwen Team.
\newblock Qwen2.5: A party of foundation models, September 2024.
\newblock URL \url{https://qwenlm.github.io/blog/qwen2.5/}.

\bibitem[Touvron et~al.(2023)Touvron, Lavril, Izacard, Martinet, Lachaux, Lacroix, Rozi{\`e}re, Goyal, Hambro, Azhar, et~al.]{touvron2023llama}
Hugo Touvron, Thibaut Lavril, Gautier Izacard, Xavier Martinet, Marie-Anne Lachaux, Timoth{\'e}e Lacroix, Baptiste Rozi{\`e}re, Naman Goyal, Eric Hambro, Faisal Azhar, et~al.
\newblock {LL}a{MA}: Open and efficient foundation language models.
\newblock \emph{arXiv preprint arXiv:2302.13971}, 2023.

\bibitem[Uehara et~al.(2024{\natexlab{a}})Uehara, Zhao, Biancalani, and Levine]{uehara2024understanding}
Masatoshi Uehara, Yulai Zhao, Tommaso Biancalani, and Sergey Levine.
\newblock Understanding reinforcement learning-based fine-tuning of diffusion models: A tutorial and review.
\newblock \emph{arXiv preprint arXiv:2407.13734}, 2024{\natexlab{a}}.

\bibitem[Uehara et~al.(2024{\natexlab{b}})Uehara, Zhao, Hajiramezanali, Scalia, Eraslan, Lal, Levine, and Biancalani]{uehara2024bridging}
Masatoshi Uehara, Yulai Zhao, Ehsan Hajiramezanali, Gabriele Scalia, Gokcen Eraslan, Avantika Lal, Sergey Levine, and Tommaso Biancalani.
\newblock Bridging model-based optimization and generative modeling via conservative fine-tuning of diffusion models.
\newblock In \emph{Advances in Neural Information Processing Systems}, 2024{\natexlab{b}}.

\bibitem[Wang et~al.(2024)Wang, Bai, Tan, Wang, Fan, Bai, Chen, Liu, Wang, Ge, Fang, Dang, Du, Ren, Men, Liu, Zhou, Zhou, and Lin]{wang2024qwen2}
Peng Wang, Shuai Bai, Sinan Tan, Shijie Wang, Zhihao Fan, Jinze Bai, Keqin Chen, Xuejing Liu, Jialin Wang, Wenbin Ge, Yang Fang, Kai Dang, Mengfei Du, Xuancheng Ren, Rui Men, Dayiheng Liu, Chang Zhou, Jingren Zhou, and Junyang Lin.
\newblock Qwen2-{VL}: Enhancing vision-language model's perception of the world at any resolution.
\newblock \emph{arXiv preprint arXiv:2409.12191}, 2024.

\bibitem[Warstadt et~al.(2019)Warstadt, Singh, and Bowman]{warstadt2019neural}
Alex Warstadt, Amanpreet Singh, and Samuel~R Bowman.
\newblock Neural network acceptability judgments.
\newblock \emph{Transactions of the Association for Computational Linguistics}, 2019.

\bibitem[Wu et~al.(2023)Wu, Trippe, Naesseth, Blei, and Cunningham]{wu2023practical}
Luhuan Wu, Brian Trippe, Christian Naesseth, David Blei, and John~P Cunningham.
\newblock Practical and asymptotically exact conditional sampling in diffusion models.
\newblock \emph{Advances in Neural Information Processing Systems}, 2023.

\bibitem[Xie et~al.(2024)Xie, Goyal, Zheng, Kan, Lillicrap, Kawaguchi, and Shieh]{xie2024monte}
Yuxi Xie, Anirudh Goyal, Wenyue Zheng, Min-Yen Kan, Timothy~P Lillicrap, Kenji Kawaguchi, and Michael Shieh.
\newblock Monte {C}arlo tree search boosts reasoning via iterative preference learning.
\newblock \emph{arXiv preprint arXiv:2405.00451}, 2024.

\bibitem[Yang et~al.(2025{\natexlab{a}})Yang, Li, Yang, Zhang, Hui, Zheng, Yu, Gao, Huang, Lv, Zheng, Liu, Zhou, Huang, Hu, Ge, Wei, Lin, Tang, Yang, Tu, Zhang, Yang, Yang, Zhou, Zhou, Lin, Dang, Bao, Yang, Yu, Deng, Li, Xue, Li, Zhang, Wang, Zhu, Men, Gao, Liu, Luo, Li, Tang, Yin, Ren, Wang, Zhang, Ren, Fan, Su, Zhang, Zhang, Wan, Liu, Wang, Cui, Zhang, Zhou, and Qiu]{yang2025qwen3}
An~Yang, Anfeng Li, Baosong Yang, Beichen Zhang, Binyuan Hui, Bo~Zheng, Bowen Yu, Chang Gao, Chengen Huang, Chenxu Lv, Chujie Zheng, Dayiheng Liu, Fan Zhou, Fei Huang, Feng Hu, Hao Ge, Haoran Wei, Huan Lin, Jialong Tang, Jian Yang, Jianhong Tu, Jianwei Zhang, Jianxin Yang, Jiaxi Yang, Jing Zhou, Jingren Zhou, Junyang Lin, Kai Dang, Keqin Bao, Kexin Yang, Le~Yu, Lianghao Deng, Mei Li, Mingfeng Xue, Mingze Li, Pei Zhang, Peng Wang, Qin Zhu, Rui Men, Ruize Gao, Shixuan Liu, Shuang Luo, Tianhao Li, Tianyi Tang, Wenbiao Yin, Xingzhang Ren, Xinyu Wang, Xinyu Zhang, Xuancheng Ren, Yang Fan, Yang Su, Yichang Zhang, Yinger Zhang, Yu~Wan, Yuqiong Liu, Zekun Wang, Zeyu Cui, Zhenru Zhang, Zhipeng Zhou, and Zihan Qiu.
\newblock Qwen3 technical report.
\newblock \emph{arXiv preprint arXiv:2505.09388}, 2025{\natexlab{a}}.

\bibitem[Yang et~al.(2025{\natexlab{b}})Yang, Tian, Li, Zhang, Shen, Tong, and Wang]{yang2025mmada}
Ling Yang, Ye~Tian, Bowen Li, Xinchen Zhang, Ke~Shen, Yunhai Tong, and Mengdi Wang.
\newblock {MM}a{DA}: Multimodal large diffusion language models.
\newblock \emph{arXiv preprint arXiv:2505.15809}, 2025{\natexlab{b}}.

\bibitem[Ye et~al.(2025)Ye, Xie, Zheng, Gao, Wu, Jiang, Li, and Kong]{ye2025dream}
Jiacheng Ye, Zhihui Xie, Lin Zheng, Jiahui Gao, Zirui Wu, Xin Jiang, Zhenguo Li, and Lingpeng Kong.
\newblock Dream 7{B}: Diffusion large language models.
\newblock \emph{arXiv preprint arXiv:2508.15487}, 2025.

\bibitem[Zhang et~al.(2024)Zhang, Huang, Zhou, Li, and Ouyang]{zhang2024accessing}
Di~Zhang, Xiaoshui Huang, Dongzhan Zhou, Yuqiang Li, and Wanli Ouyang.
\newblock Accessing {GPT}-4 level mathematical {O}lympiad solutions via {M}onte {C}arlo tree self-refine with {LL}a{MA}-3 8{B}.
\newblock \emph{arXiv preprint arXiv:2406.07394}, 2024.

\bibitem[Zhang et~al.(2025)Zhang, Lin, Ye, Zou, Ma, Liang, and Du]{zhang2025inference}
Xiangcheng Zhang, Haowei Lin, Haotian Ye, James Zou, Jianzhu Ma, Yitao Liang, and Yilun Du.
\newblock Inference-time scaling of diffusion models through classical search.
\newblock \emph{arXiv preprint arXiv:2505.23614}, 2025.

\bibitem[Zheng et~al.(2024)Zheng, Chen, Mao, Liu, Zhu, and Zhang]{zheng24masked}
Kaiwen Zheng, Yongxin Chen, Hanzi Mao, Ming-Yu Liu, Jun Zhu, and Qinsheng Zhang.
\newblock Masked diffusion models are secretly time-agnostic masked models and exploit inaccurate categorical sampling.
\newblock In \emph{International Conference on Learning Representations}, 2024.

\bibitem[Zhou et~al.(2023)Zhou, Yan, Shlapentokh-Rothman, Wang, and Wang]{zhou2023language}
Andy Zhou, Kai Yan, Michal Shlapentokh-Rothman, Haohan Wang, and Yu-Xiong Wang.
\newblock Language agent tree search unifies reasoning acting and planning in language models.
\newblock \emph{arXiv preprint arXiv:2310.04406}, 2023.

\bibitem[Zhu et~al.(2025)Zhu, Wang, Nie, Zhang, Wu, Hu, Zhou, Chen, Lin, Wen, et~al.]{zhu2025llada}
Fengqi Zhu, Rongzhen Wang, Shen Nie, Xiaolu Zhang, Chunwei Wu, Jun Hu, Jun Zhou, Jianfei Chen, Yankai Lin, Ji-Rong Wen, et~al.
\newblock {LL}a{DA} 1.5: Variance-reduced preference optimization for large language diffusion models.
\newblock \emph{arXiv preprint arXiv:2505.19223}, 2025.

\bibitem[Ziegler et~al.(2019)Ziegler, Stiennon, Wu, Brown, Radford, Amodei, Christiano, and Irving]{ziegler2019fine}
Daniel~M Ziegler, Nisan Stiennon, Jeffrey Wu, Tom~B Brown, Alec Radford, Dario Amodei, Paul Christiano, and Geoffrey Irving.
\newblock Fine-tuning language models from human preferences.
\newblock \emph{arXiv preprint arXiv:1909.08593}, 2019.

\end{thebibliography}
\bibliographystyle{iclr/iclr2026_conference}

\clearpage
\appendix
\crefalias{section}{appendix}

\paragraph{Roadmap.} The appendix is organized as follows:
\begin{itemize}
\item \Cref{app:additional_literature_review} provides an additional literature review.
\item \Cref{app:more_details_on_mdlms} includes more details on MDLMs.
\item \Cref{app:theoretical_guarantee_Of_treasure} provides proofs to the theorems in~\Cref{sec:method}.
\item \Cref{app:experimental_details} reports additional experimental details omitted from the main text.  
\item \Cref{app:more_experiments} presents complementary experiments extending the main text.
\end{itemize}

\section{Additional Literature Review}
\label{app:additional_literature_review}

Recent progress in large-scale language modeling spans three key directions: (\romannumeral1) Autoregressive Language Models (ARMs) remain dominant but suffer from sequential generation constraints; (\romannumeral2) Masked Diffusion Language Models (MDLMs) offer a parallelizable alternative via coarse-to-fine refinement; and (\romannumeral3) Test-Time Alignment (TTA) methods aim to steer model outputs toward desired objectives without fine-tuning. We briefly review each direction below.

\paragraph{Autoregressive Language Models (ARMs).} ARMs have achieved unprecedented success in the era of large-scale language modeling, powering cutting-edge systems such as ChatGPT~\citep{achiam2023gpt}, DeepSeek~\citep{guo2025deepseek}, and the Qwen series~\citep{yang2025qwen3}, and driving significant advances toward Artificial General Intelligence (AGI). Following a causal next-token prediction paradigm, ARMs exhibit strong scaling properties and demonstrate impressive capabilities in reasoning~\citep{shao2024deepseekmath,o1}, planning~\citep{driess2023palm}, and multimodal understanding~\citep{liu2023visual,wang2024qwen2,chen2024internvl}.
However, their strict causal dependency introduces inherent limitations: generation is inherently sequential, inference remains computationally expensive, and controlling global properties such as structure and reasoning steps is challenging. This raises a natural question: \emph{Is AR the only viable paradigm for achieving AGI?} Recently, an increasing number of studies have explored Masked Diffusion Language Models (MDLMs)~\citep{sahoo2024simple} as an alternative framework, leveraging coarse-to-fine refinement and parallel decoding to rethink the foundations of large-scale language modeling.

\paragraph{Masked Diffusion Language Models (MDLMs).} Building on ideas from continuous diffusion models~\citep{gao2024flow,gao2025toward,ho2020denoising}, MDLMs stem from discrete diffusion models~\citep{austin2021structured,campbell2022continuous,lou2024discrete,sahoo2024simple,shi2024simplified} and demonstrate strong potential as an alternative to autoregressive paradigms. Closed-source systems such as Gemini Diffusion~\citep{gemini-thinking} and Mercury~\citep{labs2025mercury} achieve thousands of tokens per second, offering \(5\)--\(10\times\) faster generation than AR models of comparable size, highlighting the scalability and efficiency of the diffusion paradigm. On the open-source side, LLaDA~\citep{nie2025large} represents the first billion-scale MDLM trained from scratch on \(2.3\)T tokens, achieving performance competitive with LLaMA-3-8B~\citep{dubey2024llama} across reasoning, coding, and comprehension benchmarks. Building upon this, LLaDA-1.5~\citep{zhu2025llada} integrates reinforcement learning for preference alignment, further improving mathematical and code reasoning. In parallel, a continual pre-training paradigm adapts existing ARMs into MDLMs, with models such as DiffuLLaMA~\citep{gong2024scaling} or named DiffuGPT, and Dream-7B~\citep{ye2025dream} demonstrating strong performance by leveraging pretrained backbones like LLaMA~\citep{touvron2023llama} and Qwen2.5-7B~\citep{qwen2.5} while benefiting from the diffusion-native coarse-to-fine refinement process. This growing trend highlights MDLMs as a promising yet underexplored paradigm that enables parallel decoding, controllable refinement, and scalable training, motivating our investigation.

\paragraph{Test-Time Alignment (TTA).}
Existing TTA methods can be broadly categorized into two families: (\romannumeral1) sampling-based approaches, which guide generation by adjusting the sampling distribution, and (\romannumeral2) search-based strategies, which explicitly explore multiple decoding trajectories to identify high-reward outputs.
Among sampling-based methods, the most straightforward is Best-of-\(N\) (BoN)~\citep{stiennon2020learning,tang2024realfill}, a model-agnostic strategy that generates multiple candidates and selects the one achieving the highest reward.
However, BoN is costly for diffusion models due to their iterative denoising.
Recent work addresses this by integrating particle sampling into the generation process, allowing multiple candidates to be explored in a single run. 
For instance, SVDD~\citep{li2024derivative} proposes selecting the highest-reward particle at every denoising step, while CoDe~\citep{singh2025code} extends this idea by performing selection only at specific intervals, effectively balancing computational efficiency and sample diversity.
Generalizing the particle sampling paradigm, Sequential Monte Carlo (SMC) methods~\citep{wu2023practical,cardoso2023monte,kim2025test,dou2024diffusion} adopt a principled probabilistic framework that maintains a population of particles and iteratively performs importance weighting, resampling, and proposal optimization.
Beyond sampling-based strategies, recent work has investigated search-based decoding strategies, including tree search~\citep{li2025dynamic,zhang2025inference,ramesh2025test} and Monte Carlo Tree Search (MCTS)~\citep{xie2024monte,chen2024alphamath,zhang2024accessing,zhou2023language}, to improve alignment and reasoning performance by systematically exploring a larger set of candidate trajectories.
Although sampling-based approaches~\citep{singhal2025general,pani2025test,dang2025inference} have made early inroads into TTA for MDLMs, search-based methods remain underexplored. They have shown strong results in ARMs and continuous diffusion models. Extending them to MDLMs is still an open problem, which motivates this work.

\section{More Details on MDLMs}
\label{app:more_details_on_mdlms}

For completeness, we provide additional technical details underlying MDLMs used in the main text. Specifically, we (\romannumeral1) derive the form of the reverse process used during sampling, (\romannumeral2) present the training objective and its connection to the negative Evidence Lower Bound (negative ELBO), and (\romannumeral3) describe the deterministic constraints imposed on the denoiser outputs.

\paragraph{Derivation of the reverse process.}
The reverse process follows Bayes' rule
\begin{equation}
q(\bm{z}_s | \bm{z}_t,\bm{x}) \propto q(\bm{z}_t | \bm{z}_s)q(\bm{z}_s | \bm{x})
\end{equation}
for \(s<t\). By conditional independence across positions~\citep{sahoo2024simple,shi2024simplified}, it suffices to consider a single token. If \(\bm{z}_t \neq \bm{m}\), then by the absorbing property of the mask token \(\bm{m}\) in the forward process, the token cannot have been masked at time \(s\). Consequently, the reverse transition is deterministic and simply copies back:
\begin{equation}
q(\bm{z}_s | \bm{z}_t,\bm{x})=\delta_{\bm{z}_s,\bm{z}_t}.
\end{equation}
If \(\bm{z}_t=\bm{m}\), only \(\bm{z}_s\in\{\bm{m},\bm{x}\}\) have support. Using the forward transition \(q(\bm{z}_t=\bm{m}| \bm{z}_s=\bm{m})=1\) and \(q(\bm{z}_t=\bm{m}| \bm{z}_s=\bm{x})=1-\alpha_t/\alpha_s\), together with the prior \(q(\bm{z}_s=\bm{m}| \bm{x})=1-\alpha_s\) and \(q(\bm{z}_s=\bm{x}| \bm{x})=\alpha_s\), the unnormalized weights are
\(
w_{\bm{m}}=(1-\alpha_s)
\)
and
\(
w_{\bm{x}}=\alpha_s-\alpha_t
\),
with normalization \(Z=w_{\bm{m}}+w_{\bm{x}}=1-\alpha_t\). Hence
\begin{equation}
q(\bm{z}_s | \bm{z}_t=\bm{m},\bm{x})
=\mathrm{Cat}\Bigl(\bm{z}_s;\frac{(1-\alpha_s)\bm{m}+(\alpha_s-\alpha_t)\bm{x}}{1-\alpha_t}\Bigr),
\end{equation}
which is the expression used in~\cref{eq:time_reversal}. Intuitively, the mass not yet absorbed by time \(s\) splits between staying masked and reverting to the clean token \(\bm{x}\), in proportions \((1-\alpha_s)\) and \((\alpha_s-\alpha_t)\), respectively.

\paragraph{Training and connection to the negative ELBO.}
MDLMs are trained by minimizing the negative ELBO, which decomposes into a data term and a diffusion term that fits the learned reverse kernel \(p_\theta(\bm{z}_s | \bm{z}_t)\) to the true reverse dynamics \(q(\bm{z}_s | \bm{z}_t,\bm{x})\)~\citep{sahoo2024simple,shi2024simplified}. 
In the discrete-time setting with \(T\) steps, taking \(s(i)=(i-1)/T\) and \(t(i)=i/T\), the diffusion term simplifies to a time-weighted masked cross-entropy:
\begin{equation}
\mathcal{L}_{\mathrm{diff}}
=\sum_{i=1}^{T}\mathbb{E}_{\bm{x},\bm{z}_{t(i)}}\biggl[
\frac{\alpha_{t(i)}-\alpha_{s(i)}}{1-\alpha_{t(i)}}
\bigl(-\log \langle \bm{x}_\theta(\bm{z}_{t(i)},t(i)),\bm{x}\rangle\bigr)
\biggr].
\end{equation}
Taking \(T\to\infty\) yields the continuous-time negative ELBO (with \(\alpha_t'\) the derivative of \(\alpha_t\))
\begin{equation}
\mathcal{L}_\infty
=\int_{0}^{1}\frac{\alpha_t'}{1-\alpha_t}
\mathbb{E}_{\bm{x},\bm{z}_t\sim q(\cdot\mid \bm{x})}\bigl[
-\log \langle \bm{x}_\theta(\bm{z}_t,t),\bm{x}\rangle\bigr]\mathrm{d}t.
\end{equation}
For sequences with token-wise independent corruption and a factorized decoder (\cref{eq:naive_parallel_sampling}), this becomes a weighted average of MDLM losses over masked positions:
\begin{equation}
\mathcal{L}_\infty
=\int_{0}^{1}\frac{\alpha_t'}{1-\alpha_t}
\mathbb{E}\biggl[
\sum_{\ell=1}^{L}\bm{1}_{\{\bm{z}_t^{\ell}=\bm{m}\}}
\bigl(-\log \langle \bm{x}_\theta^{\ell}(\bm{z}_t^{1:L},t),\bm{x}^{\ell}\rangle\bigr)
\biggr]\mathrm{d}t.
\end{equation}
Thus the continuous-time training loss is exactly the negative ELBO specialized to masking, penalizing reconstruction errors only at positions that are masked by the forward process.

\paragraph{Denoiser output constraints.} We impose two output-time constraints on \(\bm{x}_\theta\) via deterministic post-processing (i.e., by substitution rather than learning):
(\romannumeral1) \emph{Zero masking probabilities}, by setting the mask logit to \(-\infty\), so \(\langle \bm{x}_\theta(\bm{z}_t,t),\bm{m}\rangle=0\);
(\romannumeral2) \emph{Carry-over unmasking}, i.e., for any position already unmasked, copy through the observed value so that if \(\bm{z}_t\neq \bm{m}\) then \(\bm{x}_\theta(\bm{z}_t,t)=\bm{z}_t\).
These constraints reflect the absorbing dynamics and ensure that only masked positions contribute to the loss, tightening the bound and stabilizing training and sampling.

\section{Theoretical Guarantee of \textsc{TReASURe}}
\label{app:theoretical_guarantee_Of_treasure} 

\UnmaskBranch*
\begin{proof}
Let \(n\) denote the number of masked positions at the parent node, and let \(\tau_n\) be the current time. Reparameterize time by \(\gamma(t) = -\log(1-\alpha_t)\), and discretize \(\gamma\) with uniform step size \(h\). Suppose we run na\"ive parallel sampling and terminate as soon as any position unmasks.

Consider any masked position \(\ell\). For \(s'< s\) such that \(\gamma_{s'}=\gamma_s-h\), the exact reverse transition gives
\begin{equation}
\mathbb{P}\bigl(\bm{z}_{s'}^{\ell} = \bm{m} \big| \bm{z}_{s}^{\ell} = \bm{m}, \bm{x}_\theta\bigr) = \frac{1 - \alpha_{s'}}{1 - \alpha_{s}} = \exp(-(\gamma_{s'}-\gamma_s)) = \exp(-h)
\end{equation}
This means that the probability of the token \(\bm{z}_s^\ell\) remaining masked is \(\exp(-h)\), and the probability it unmasks is \(1 - \exp(-h)\). Under na\"ive parallel sampling, the reverse transition is factorized as
\begin{equation}
p_\theta(\bm{z}_{s'}^{1:L}|\bm{z}_{s}^{1:L}) = \prod_{\ell=1}^{L} q(\bm{z}_{s'}^{\ell}| \bm{z}_{s}^{1:L}, \bm{x}_\theta),
\end{equation}
so the transitions of masked tokens are conditionally independent. Therefore, for the \(n\) masked tokens, the probability that no token unmasks in one step is
\begin{equation}
(\exp(-h))^n = \exp(-nh).
\end{equation}
Thus, the probability that at least one token unmasks in a step is \(p_h \coloneqq 1 - \exp(-nh)\).

Let \(G_h\) be the number of steps until the first unmasking occurs. Then for any \(k\geq 0\),
\begin{equation}
\mathbb{P}(G_h > k) = (\exp(-nh))^{k} = \exp(-nhk).
\end{equation}
Since the run halts at the first unmasking step, \(G_h\) is geometric with success probability \(p_h=1-\exp(-n h)\) on \(\{1,2,\dotsc\}\).
Hence
\begin{equation}
\mathbb{E}[G_h] = \frac{1}{p_h} = \frac{1}{1-\exp(-n h)}.
\end{equation}
Now suppose we wish to obtain \(b(n)\) child nodes by restarting from the same parent node and stopping each run at its first unmasking. Ignoring collisions (so that \(b(n)\) runs yield \(b(n)\) distinct children), the total number of model evaluations is \(\sum_{r=1}^{b(n)} G_h^{(r)}\) with \(G_h^{(r)}\) i.i.d.~as \(G_h\). Therefore,
\begin{equation}
\mathbb{E}[\mathrm{evals}]
= \sum_{r=1}^{b(n)} \mathbb{E}[G_h^{(r)}]
= \frac{b(n)}{1-\exp(-n h)}.
\end{equation}

When \(b(n)=1\), the \(\gamma\)-time waiting \(T_h \coloneqq h G_h\) satisfies
\begin{equation}
\mathbb{P}(T_h > s) = \bigl(\exp(-n h)\bigr)^{\lfloor s/h \rfloor} \to \exp(-n s) \quad \text{as } h \to 0,
\end{equation}
so \(T_h\) converges in distribution to \(\mathrm{Exponential}(n)\). By the inverse-CDF representation, if \(u \sim \mathcal{U}(0,1)\) then
\begin{equation}
\Delta_n \coloneqq -\frac{1}{n}\log u \sim \mathrm{Exponential}(n),
\end{equation}
and hence
\begin{equation}
\gamma(\tau_{n-1}) = \gamma(\tau_n) + \Delta_n.
\end{equation}
Mapping back via \(\gamma(t)=-\log(1-\alpha_t)\) yields the FHS draw
\begin{equation}
\tau_{n-1} = \alpha^{-1}\bigl(1 - u^{1/n}(1 - \alpha_{\tau_n})\bigr),\quad u \sim \mathcal{U}(0,1),
\end{equation}
and by exchangeability the committing index is uniform on the \(n\) masked positions. Thus for \(b(n)=1\) the na\"ive-parallel first-change law over \((\tau_{n-1}, \ell)\) coincides with FHS, while the expected number of model evaluations is \(1/\bigl(1-\exp(-n h)\bigr)\) for the grid scheme versus a single evaluation for \textsc{UnmaskBranch}.
\end{proof}

\ResubGap*
\begin{proof}
By~\Cref{assump:lipschitz}, for any realization \(\bm{X}^{1:L}\),
\begin{equation}
|r(\bm{X}^{1:L})-r(\hat{\bm{x}}_0^{1:L})|
 \le 
\beta d_{\mathrm{H}}(\bm{X}^{1:L},\hat{\bm{x}}_0^{1:L})
= \beta \sum_{\ell=1}^L \mathbf{1}_{\{X^\ell\neq \hat{\bm{x}}_0^\ell\}}.
\end{equation}
Taking expectations on both sides and applying the triangle inequality,
\begin{equation}
\Bigl|\mathbb{E}[r(\bm{X}^{1:L})]-r(\hat{\bm{x}}_0^{1:L})\Bigr|\leq
\beta \sum_{\ell=1}^L \mathbb{P}(X^\ell\neq \hat{\bm{x}}_0^\ell).
\end{equation}
For already unmasked positions \(\ell\notin\mathcal{I}_{n-1}\), we have \(X^\ell=\hat{\bm{x}}_0^\ell\), so these terms vanish. For \(\ell\in\mathcal{I}_{n-1}\), since \(X^\ell\sim\mathrm{Cat}(\cdot;\bm{\mu}_n^\ell)\) and \(\hat{\bm{x}}_0^\ell=\arg\max_v \bm{\mu}_n^\ell(v)\),
\begin{equation}
\mathbb{P}(X^\ell\neq \hat{\bm{x}}_0^\ell) = 1-\max_{v\in[V]} \bm{\mu}_n^\ell(v).
\end{equation}
Substituting gives the bound in the statement.
\end{proof}

\MonotoneM*
\begin{proof}
Write \(\mathcal{S}_n(m)\) for the set of parent nodes after the \(n\)th expansion step when using schedule \(m(\cdot)\). Thus at the start of the search we have \(\mathcal{S}_L(m) = \{(\bm z_L^{1:L}, \tau_L)\}\), corresponding to the all-mask initial state, while at the end we obtain \(\mathcal{S}_0(m)\), the collection of complete sequences with their rewards, from which the final \(\arg\max\) is taken. Define \(\mathcal{S}_n(m')\) analogously. Because the two runs are coupled (same \(\textsc{UnmaskBranch}\) uniforms, same committing indices, same model output \(\bm\mu_n\)), they produce the \emph{same} candidate pool \(\mathcal{C}_n\) at each level \(n\).

At level \(n\), selection applies \(\mathrm{TopK}_{m(n)}\) or \(\mathrm{TopK}_{m'(n)}\) to \(\mathcal{C}_n\) with the same deterministic tie-breaking. Since \(m'(n)\ge m(n)\), we have set inclusion
\begin{equation}
\mathrm{TopK}_{m'(n)}(\mathcal{C}_n) \supseteq\mathrm{TopK}_{m(n)}(\mathcal{C}_n),
\end{equation}
and hence \(\mathcal{S}_{n-1}(m')\supseteq \mathcal{S}_{n-1}(m)\). Inducting downwards from \(n=L\) to \(n=1\) yields \(\mathcal{S}_0(m')\supseteq \mathcal{S}_0(m)\).

The returned reward is the maximum of \(r(\cdot)\) over the respective final sets:
\begin{equation}
r_\star(m)=\max_{(\bm z_0^{1:L},\tau_0,r)\in \mathcal{S}_0(m)} r,\quad
r_\star(m')=\max_{(\bm z_0^{1:L},\tau_0,r)\in \mathcal{S}_0(m')} r.
\end{equation}
Since a maximum over a superset cannot be smaller, \(r_\star(m')\ge r_\star(m)\). The strictness condition follows immediately: if \(\mathcal{S}_0(m')\setminus \mathcal{S}_0(m)\) contains a sequence with reward exceeding \(\max_{\mathcal{S}_0(m)} r\), then the inequality is strict; otherwise the inequality reduces into an equality.
\end{proof}

\section{Experimental Details}
\label{app:experimental_details}

\paragraph{Details on beam width and tree width.} To ensure reproducibility, we report two structural statistics of the search process:

\begin{itemize}[leftmargin=*]
\item Beam width \(b(n)\): At the 
\(n\)th step, the beam width denotes the number of child nodes expanded from a single parent node. This reflects the local branching factor of the search tree and controls how many alternatives are explored before pruning.
\item Tree width \(m(n)\). Defined as the number of nodes retained after pruning at \(n\)th expansion step. In other words, the tree width corresponds to the effective number of candidates that survive pruning, thereby characterizing the degree of parallel exploration. This quantity directly determines the number of function evaluations (NFEs) required at each step, since every surviving node must be expanded.
\end{itemize}

In all of our experiments, both the beam width and the tree width are set as fixed constants for each configuration. We denote each configuration in the format ``\((b(n), m(n))\)''. For example, the settings \((5,2)\), \((6,4)\), \((8,6)\), \((12,8)\), \((20,16)\) indicate that each parent node expands into \(5\), \(6\), \(8\), \(12\), or \(20\) branches, while the tree width is fixed to \(2\), \(4\), \(6\), \(8\), or \(16\), respectively.

\paragraph{Details on baselines and evaluation metrics.} All baseline results reported in this paper are obtained directly from the official implementation of FK-Diffusion-Steering~\citep{singhal2025general}. All comparisons are based on running the released scripts without modification, unless otherwise specified. Similarly, all evaluation metrics are computed following the implementations provided in the same repository. We adopt the evaluation scripts released therein to guarantee fairness and consistency across methods. Specifically, the following pretrained models are used to calculate each metric:

\begin{itemize}[leftmargin=*]
\item \textbf{Perplexity} (\textsc{Gen}.~\textsc{PPL}): To encourage fluency during TTA, both the reward and the evaluation are based on generative perplexity computed with the pretrained GPT2-XL model~\citep{radford2019language}.
\item \textbf{Linguistic Acceptability} (\textsc{CoLA}): This metric favors grammatically well-formed sentences by employing a classifier~\citep{morris2020textattack} trained on the Corpus of Linguistic Acceptability (CoLA)~\citep{warstadt2019neural}. Importantly, CoLA classification accuracy is used consistently both as the reward signal during TTA and as the evaluation metric for reporting performance.
\item \textbf{Toxicity} (\textsc{Toxicity}): This metric leverages a toxicity detection classifier~\citep{morris2020textattack} to guide generation, with the task framed as red-teaming for harmful content. The goal is to assess model vulnerabilities in producing toxic or offensive outputs. Consistently, toxicity classification accuracy (with ``toxic'' as the positive class) is used both as the reward signal during test-time alignment and as the evaluation metric for reporting performance.
\item \textbf{Sentiment} (\textsc{Sentiment}): This metric employs a sentiment classifier~\citep{barbieri2020tweeteval} trained on social media text to guide outputs toward a desired polarity (e.g., positive sentiment). Consistently, sentiment classification accuracy (with ``positive'' as the target class) is used both as the reward signal during test-time alignment and as the evaluation metric for reporting performance.
\end{itemize}

\section{More Experiments}
\label{app:more_experiments}

\subsection{Variation of Rewards}
\label{app:variation_of_rewards}

\paragraph{Experimental settings.}  
To analyze the stability of reward signals during denoising, we generate full trajectories using first-hitting sampling (FHS). For each intermediate state \(\bm{z}_n^{1:L}\), we follow the reward estimation protocol of~\citet{singhal2025general}: we sample \(1{,}000\) candidate completions independently from the conditional distribution \(p_\theta(\cdot | \bm{z}_n^{1:L}, \tau_{n-1})\), compute the reward for each, and aggregate them to form an empirical distribution of reward estimates at that step. This allows us to track not only the mean reward but also its variability across the denoising process.

\paragraph{Results.}  
\Cref{fig:variation_of_reward_app} visualizes the distribution of estimated rewards at different denoising steps for two representative tasks, complementing~\cref{fig:prune_motivation}. We observe that the reward estimates fluctuate substantially, with wide interquartile ranges and large min--max spans. This high variance highlights a key weakness of direct sampling-based estimation: small stochastic differences in sampled tokens can lead to disproportionately large changes in the measured reward. As a result, pruning decisions based on these noisy estimates can be unreliable, especially in the middle stages of denoising where uncertainty is greatest. These findings underscore the motivation for our proposed \textsc{ResubstituteScore}, which provides low-variance, deterministic estimates and stabilizes pruning in tree search.

\begin{figure}[htb]
\centering
\includegraphics[width=0.48\textwidth]{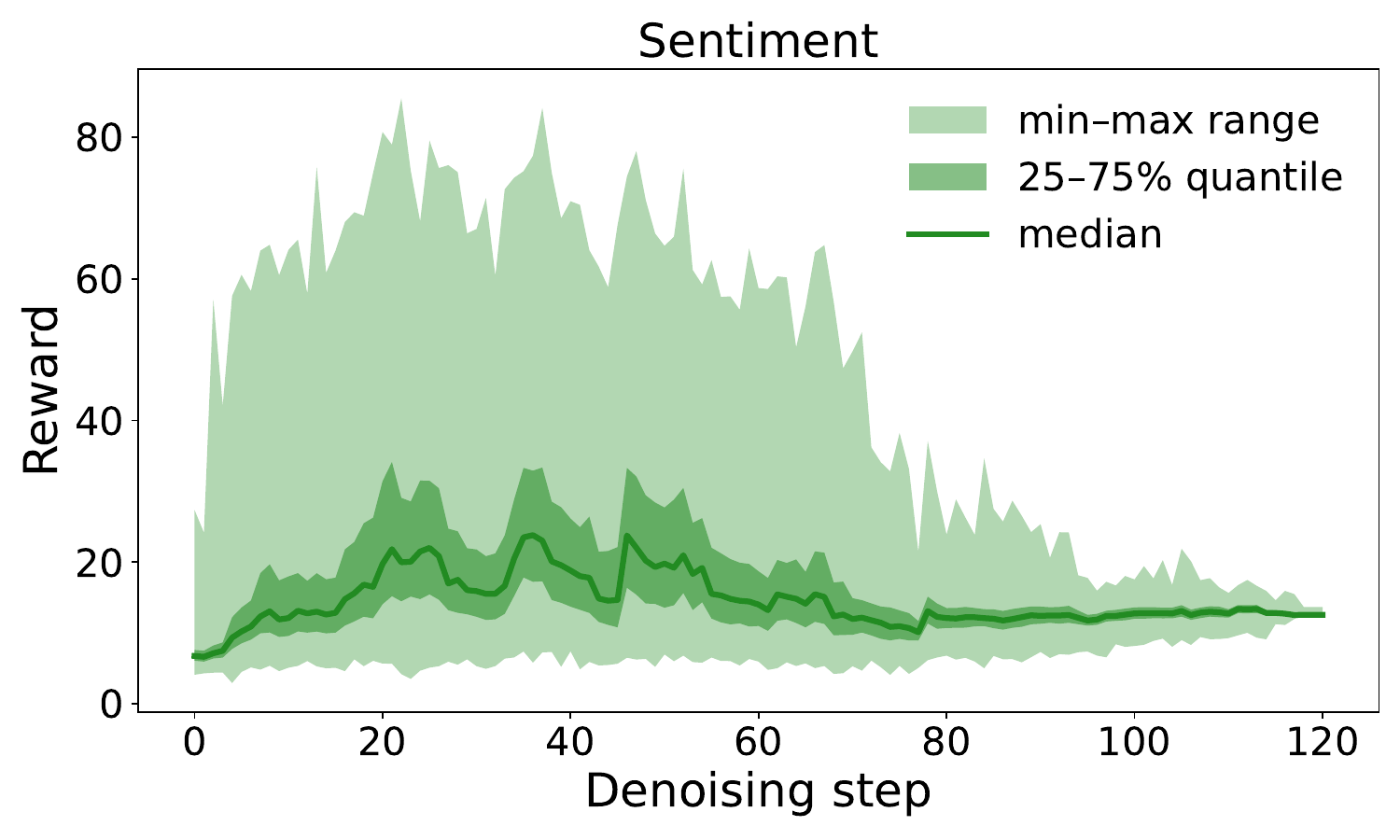}
\includegraphics[width=0.48\textwidth]{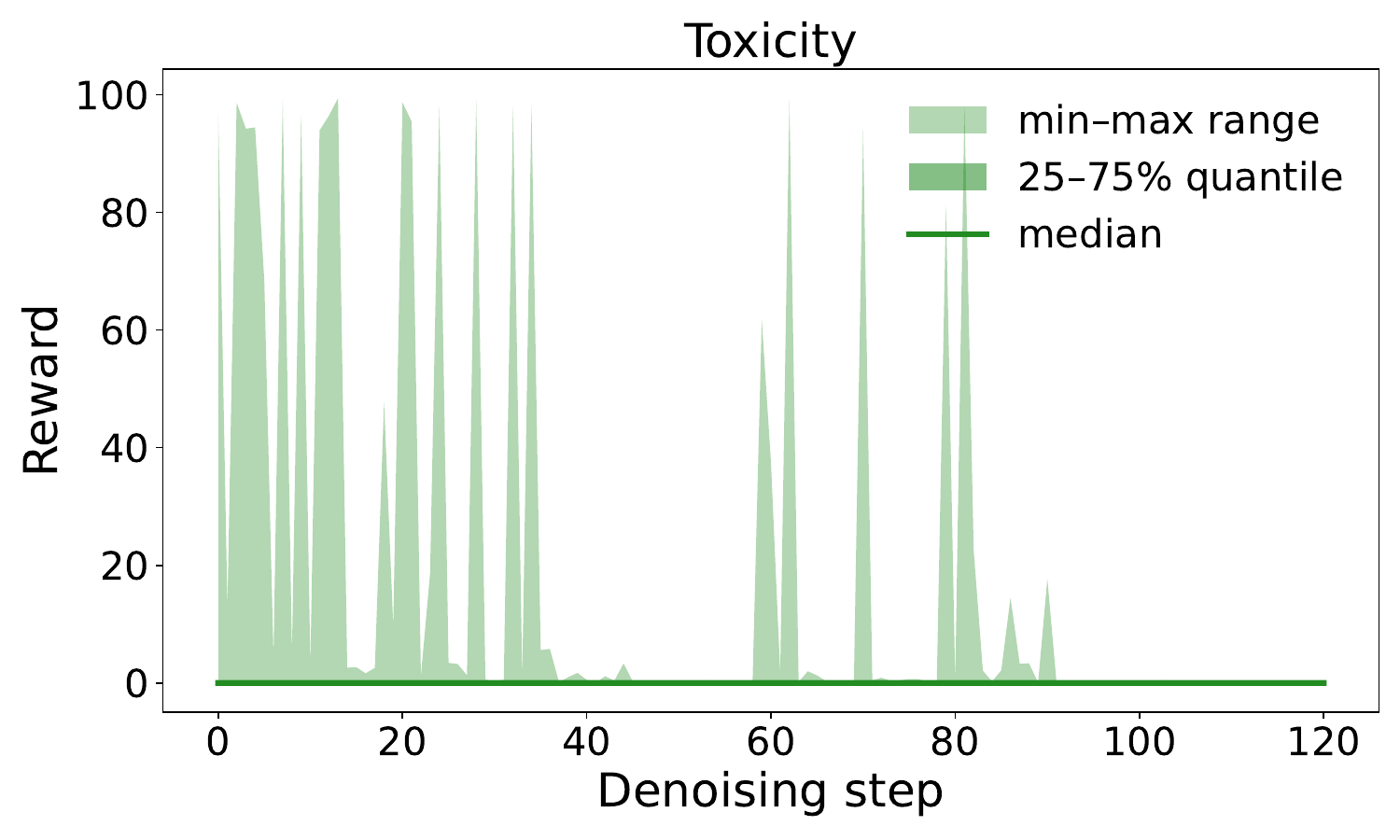}
\caption{\textbf{Variation of direct reward estimates across denoising steps.} 
We estimate rewards by sampling \(1{,}000\) completions per step without applying TTA, following~\citet{singhal2025general}. Both \textsc{Sentiment} (left) and \textsc{Toxicity} (right) exhibit high variance, with wide quantile bands and occasional outliers. In particular, \textsc{Toxicity} shows large fluctuations: although unconditional generations are usually non-toxic, rare toxic samples in the estimation process can cause sharp spikes. These results illustrate the instability of direct sampling-based scoring and motivate the need for low-variance alternatives such as \textsc{ResubstituteScore}.}
\label{fig:variation_of_reward_app}
\end{figure}

\subsection{Effect of Increasing Unmasking Groups}

\paragraph{Experimental settings.} In addition to beam width \(b(n)\) and tree width \(m(n)\), we also examine the effect of increasing the number of \emph{unmasking groups} in FHS. In the standard setting, FHS unmasks a single token uniformly from the remaining masked positions. Here, we generalize this by unmasking \(k\) tokens simultaneously. For each of the \(k\) positions, we compute the marginal distribution and select the top-\(b(n)\) candidates, yielding \(k \cdot b(n)\) child nodes. We then apply \textsc{ResubstituteScore} to estimate their rewards and retain the top \(m(n)\) nodes for expansion. We fix \(b(n)=5\) and \(m(n)=2\) across experiments, and vary \(k \in \{1,2,3\}\) (with \(k=1\) corresponding to the standard \textsc{TReASURe} algorithm).

\paragraph{Results.} \Cref{tab:unmasking_groups} shows that increasing the number of unmasking groups consistently improves performance across all metrics. Larger groups expand the search space more aggressively, enabling FHS to find higher-reward continuations while keeping the pruning capacity \(m(n)\) fixed. It is worth noting, however, that this strategy is a heuristic: unlike standard FHS, it no longer preserves the equivalence to naive parallel sampling. Nevertheless, the empirical gains suggest that parallel unmasking is a promising direction, and its theoretical implications merit further study.

\begin{table}[ht]
\centering
\renewcommand{\arraystretch}{1}
\setlength{\tabcolsep}{8pt}
\begin{tabular}{ccccc}
\toprule
\#\textsc{Unmasking Groups} & \textsc{CoLA} $\uparrow$ & \textsc{Toxicity} $\uparrow$ & \textsc{Sentiment} $\uparrow$ & \textsc{Gen}.~\textsc{PPL} $\downarrow$ \\
\midrule
\rowcolor{gray!15}
\(1\) & \(77.67\) & \(64.00\) & \(98.67\) & \(15.37\) \\
\(2\) & \(80.00\) & \(81.05\) & \(100.00\) & \(13.60\) \\
\(3\) & \(81.05\) & \(87.37\) & \(100.00\) & \(12.56\) \\
\bottomrule
\end{tabular}
\caption{\textbf{Effect of unmasking groups in FHS.} We vary the number of unmasking groups (\(k\)), which expands the child nodes per step from \(b(n)\) to \(k \cdot b(n)\). 
The gray row (\(k=1\)) is directly copied from \Cref{tab:main} as the baseline \textsc{TReASURe}. Increasing \(k\) yields consistently higher rewards and lower perplexity, indicating that parallel unmasking can improve search quality without additional per-step NFEs. Although this heuristic breaks the formal equivalence between FHS and naive parallel sampling, it nonetheless provides a simple and effective enhancement.}
\label{tab:unmasking_groups}
\end{table}

\subsection{Diversity Measure}

We further report the diversity of generated samples under different reward functions, as an extension of the main results in \Cref{tab:main}. 
Following standard practice, we adopt Distinct-\(n\) (\textsc{Dist-\(n\)}) metrics, including \textsc{Dist-1}, \textsc{Dist-2}, and \textsc{Dist-3}, which compute the ratio of unique \(n\)-grams to the total number of generated \(n\)-grams. 
These metrics capture the lexical diversity of the outputs: higher values indicate a greater variety of tokens and reduced repetition. 
All experimental settings remain identical to those described in the main text. 
As shown in~\Cref{tab:div_exp}, our method achieves the best overall performance while incurring only a marginal decrease in diversity.

\begin{table}[t]
\centering
\small
\renewcommand{\arraystretch}{0.9} 
\setlength{\tabcolsep}{8pt}       
\begin{tabular}{lcccc}
\toprule
\textsc{Method (Reward)} & \textsc{NFE} & \textsc{Dist-\(1\)} \(\uparrow\) & \textsc{Dist-\(2\)} \(\uparrow\) & \textsc{Dist-\(3\)} \(\uparrow\) \\
\midrule
MDLMs & \(1\) & \(63.60\) & \(92.63\) & \(94.00\) \\
\midrule
BoN (\textsc{CoLA}) & \(2\) & \(57.17\) & \(90.46\) & \(93.53\) \\
BoN (\textsc{TOXICITY}) & \(2\) & \(57.50\) & \(90.53\) & \(93.51\) \\
BoN (\textsc{SENTIMENT}) & \(2\) & \(56.99\) & \(90.75\) & \(93.85\) \\
BoN (\textsc{GEN. PPL}) & \(2\) & \(55.95\) & \(89.62\) & \(93.46\) \\
BoN (\textsc{CoLA}) & \(4\) & \(63.57\) & \(92.04\) & \(93.62\) \\
BoN (\textsc{TOXICITY}) & \(4\) & \(62.90\) & \(92.10\) & \(93.77\) \\
BoN (\textsc{SENTIMENT}) & \(4\) & \(62.50\) & \(92.58\) & \(94.11\) \\
BoN (\textsc{GEN. PPL}) & \(4\) & \(60.09\) & \(90.24\) & \(93.18\) \\
BoN (\textsc{CoLA}) & \(6\) & \(59.15\) & \(91.23\) & \(93.72\) \\
BoN (\textsc{TOXICITY}) & \(6\) & \(57.64\) & \(90.45\) & \(93.24\) \\
BoN (\textsc{SENTIMENT}) & \(6\) & \(57.09\) & \(91.50\) & \(94.20\) \\
BoN (\textsc{GEN. PPL}) & \(6\) & \(54.83\) & \(88.29\) & \(92.26\) \\
BoN (\textsc{CoLA}) & \(8\) & \(63.34\) & \(92.26\) & \(93.75\) \\
BoN (\textsc{TOXICITY}) & \(8\) & \(83.41\) & \(94.53\) & \(93.69\) \\
BoN (\textsc{SENTIMENT}) & \(8\) & \(62.56\) & \(92.57\) & \(94.28\) \\
BoN (\textsc{GEN. PPL}) & \(8\) & \(40.44\) & \(59.59\) & \(89.77\) \\
BoN (\textsc{CoLA}) & \(16\) & \(63.03\) & \(92.52\) & \(93.95\) \\
BoN (\textsc{TOXICITY}) & \(16\) & \(83.56\) & \(94.43\) & \(93.57\) \\
BoN (\textsc{SENTIMENT}) & \(16\) & \(62.27\) & \(92.57\) & \(94.29\) \\
BoN (\textsc{GEN. PPL}) & \(16\) & \(42.44\) & \(59.28\) & \(88.49\) \\
\midrule
FK-steering (\textsc{CoLA}) & \(2\) & \(57.73\) & \(90.75\) & \(93.73\) \\
FK-steering (\textsc{TOXICITY}) & \(2\) & \(57.03\) & \(89.89\) & \(93.29\) \\
FK-steering (\textsc{SENTIMENT}) & \(2\) & \(56.60\) & \(90.30\) & \(93.69\) \\
FK-steering (\textsc{GEN. PPL}) & \(2\) & \(55.82\) & \(89.63\) & \(93.58\) \\
FK-steering (\textsc{CoLA}) & \(4\) & \(63.65\) & \(92.11\) & \(93.86\) \\
FK-steering (\textsc{TOXICITY}) & \(4\) & \(62.46\) & \(91.46\) & \(93.37\) \\
FK-steering (\textsc{SENTIMENT}) & \(4\) & \(62.80\) & \(93.28\) & \(94.55\) \\
FK-steering (\textsc{GEN. PPL}) & \(4\) & \(59.99\) & \(90.57\) & \(93.41\) \\
FK-steering (\textsc{CoLA}) & \(6\) & \(58.87\) & \(90.98\) & \(93.54\) \\
FK-steering (\textsc{TOXICITY}) & \(6\) & \(56.82\) & \(89.78\) & \(92.89\) \\
FK-steering (\textsc{SENTIMENT}) & \(6\) & \(55.89\) & \(90.67\) & \(93.71\) \\
FK-steering (\textsc{GEN. PPL}) & \(6\) & \(55.27\) & \(88.22\) & \(92.47\) \\
FK-steering (\textsc{CoLA}) & \(8\) & \(64.06\) & \(92.50\) & \(93.96\) \\
FK-steering (\textsc{TOXICITY}) & \(8\) & \(63.09\) & \(91.86\) & \(93.20\) \\
FK-steering (\textsc{SENTIMENT}) & \(8\) & \(63.19\) & \(93.21\) & \(94.36\) \\
FK-steering (\textsc{GEN. PPL}) & \(8\) & \(59.40\) & \(89.30\) & \(92.34\) \\
FK-steering (\textsc{CoLA}) & \(16\) & \(64.11\) & \(92.56\) & \(93.91\) \\
FK-steering (\textsc{TOXICITY}) & \(16\) & \(64.21\) & \(92.59\) & \(93.43\) \\
FK-steering (\textsc{SENTIMENT}) & \(16\) & \(62.53\) & \(93.07\) & \(94.32\) \\
FK-steering (\textsc{GEN. PPL}) & \(16\) & \(58.29\) & \(87.93\) & \(91.54\) \\
\midrule
\textsc{TReASURe} (\textsc{CoLA}) & \(2\) & \(69.12\) & \(94.40\) & \(93.68\) \\
\textsc{TReASURe} (\textsc{TOXICITY}) & \(2\) & \(63.80\) & \(94.49\) & \(94.23\) \\
\textsc{TReASURe} (\textsc{SENTIMENT}) & \(2\) & \(61.05\) & \(94.91\) & \(94.98\) \\
\textsc{TReASURe} (\textsc{GEN. PPL}) & \(2\) & \(43.41\) & \(78.00\) & \(88.76\) \\
\textsc{TReASURe} (\textsc{CoLA}) & \(4\) & \(70.41\) & \(92.11\) & \(91.82\) \\
\textsc{TReASURe} (\textsc{TOXICITY}) & \(4\) & \(64.59\) & \(93.67\) & \(93.17\) \\
\textsc{TReASURe} (\textsc{SENTIMENT}) & \(4\) & \(60.09\) & \(94.39\) & \(94.83\) \\
\textsc{TReASURe} (\textsc{GEN. PPL}) & \(4\) & \(55.95\) & \(89.62\) & \(93.46\) \\
\textsc{TReASURe} (\textsc{CoLA}) & \(6\) & \(57.73\) & \(90.75\) & \(93.73\) \\
\textsc{TReASURe} (\textsc{TOXICITY}) & \(6\) & \(57.50\) & \(90.53\) & \(93.51\) \\
\textsc{TReASURe} (\textsc{SENTIMENT}) & \(6\) & \(56.99\) & \(90.75\) & \(93.85\) \\
\textsc{TReASURe} (\textsc{GEN. PPL}) & \(6\) & \(57.64\) & \(90.45\) & \(93.24\) \\
\textsc{TReASURe} (\textsc{CoLA}) & \(8\) & \(78.05\) & \(92.13\) & \(89.69\) \\
\textsc{TReASURe} (\textsc{TOXICITY}) & \(8\) & \(78.72\) & \(94.55\) & \(91.89\) \\
\textsc{TReASURe} (\textsc{SENTIMENT}) & \(8\) & \(80.85\) & \(97.14\) & \(94.92\) \\
\textsc{TReASURe} (\textsc{GEN. PPL}) & \(8\) & \(46.67\) & \(90.45\) & \(93.24\) \\
\textsc{TReASURe} (\textsc{CoLA}) & \(16\) & \(89.06\) & \(93.43\) & \(88.07\) \\
\textsc{TReASURe} (\textsc{TOXICITY}) & \(16\) & \(74.54\) & \(95.76\) & \(94.59\) \\
\textsc{TReASURe} (\textsc{SENTIMENT}) & \(16\) & \(83.19\) & \(97.01\) & \(94.73\) \\
\textsc{TReASURe} (\textsc{GEN. PPL}) & \(16\) & \(32.95\) & \(52.72\) & \(61.77\) \\
\bottomrule
\end{tabular}
\caption{\textbf{Diversity comparison under different methods and reward functions.} Results are reported in terms of \textsc{Dist}-\(n\) metrics (\textsc{Dist}-1/2/3).}
\label{tab:div_exp}
\end{table}

\section{Usage of Large Language Models}

We used Large Language Models (LLMs) to polish the writing, generate the cartoon explorer in~\cref{fig:illustration}, and assist with routine plotting code.

\end{document}